\documentclass[10pt, a4paper]{scrartcl}

\usepackage{latexsym}
\usepackage{amssymb}
\usepackage[nonamelimits]{amsmath}
\usepackage[authoryear, round]{natbib}
\usepackage{url}
\usepackage{amsthm}

\usepackage{color}
\definecolor{myred}{rgb}{0.5,0,0}
\definecolor{myblue}{rgb}{0,0,0.75}
\definecolor{mygreen}{rgb}{0,0.5,0}

\usepackage{ifpdf}

\ifpdf
   \usepackage[pdftex]{graphicx}
   \pdfcompresslevel=9
   \usepackage[pdftex,
              pdftitle={Calibrating sufficiently},
               pdfsubject={},
               pdfkeywords={},
               pdfauthor={Dirk Tasche},
               pdfstartview=FitR,
               breaklinks=true,
               colorlinks=true,
               citecolor=myred,
               linkcolor=myblue,
               urlcolor=mygreen]{hyperref}
\else
   \usepackage[dvips]{graphicx}
   \usepackage{hyperref}
   \usepackage{rotating}
\fi

\textwidth16cm
\textheight23cm
\topmargin-1cm
\oddsidemargin0cm
\evensidemargin0cm

\frenchspacing
\parindent0cm
\parskip1.0ex

\newtheorem{theorem}{Theorem}[section]
\newtheorem{lemma}[theorem]{Lemma}
\newtheorem{remark}[theorem]{Remark}
\newtheorem{example}[theorem]{Example}
\newtheorem{proposition}[theorem]{Proposition}
\newtheorem{definition}[theorem]{Definition}
\newtheorem{corollary}[theorem]{Corollary}
\newtheorem{assumption}[theorem]{Assumption}

\newcommand{\var}{\mathrm{var}}

\title{Calibrating sufficiently}

\author{%
Dirk Tasche\thanks{Swiss Financial Market Supervisory Authority (FINMA). The opinions 
expressed in this note are those of the author 
and do not necessarily reflect views of FINMA.\newline
Email: dirk.tasche@gmx.net}}

\date{}

\begin{document}
\maketitle

\begin{abstract}
When probabilistic classifiers are trained and calibrated, the so-called grouping loss
component of the calibration loss can easily be overlooked. Grouping loss 
refers to the gap between observable information and information actually exploited 
in the calibration exercise. We investigate the relation between grouping loss and
the concept of sufficiency, identifying comonotonicity as a useful criterion for
sufficiency. We revisit the probing reduction approach of 
\cite{langford2005estimating}
and find that it produces an estimator of probabilistic classifiers that reduces
grouping loss. Finally, we discuss Brier curves as tools to support 
training and `sufficient' calibration of probabilistic classifiers.
\\
\textsc{Keywords:} Probabilistic classifier, calibration, Brier score,
sufficiency, probing reduction, Brier curve. \\
\textsc{2020 Mathematics Subject Classification}: 62B05, 62P99
\end{abstract}

\section{Introduction}
\label{se:intro}

Binary classification, in the first place, deals with decision tools (classifiers) 
that facilitate the prediction of the classes of instances on the basis of the so-called features
of the instances. Accordingly, the simplest classifiers are crisp (or discrete) in the sense of
having the set $\{0,1\}$ as output range: $1$ for `predict positive class', $0$ for `predict
negative class'. Scoring (or soft) classifiers provide output in a continuous range,
usually with the interpretation that high values indicate high likelihood of the
instance belonging to the positive class, while low values suggest that membership of
the negative class is more likely.

In many applications of classification, there is a need for `calibrated' probabilistic classifiers
which reflect the likelihood of the positive class given the features of an instance in a
frequentist statistical sense \citep{Platt99probabilisticoutputs, Zadrozny&Elkan2002,
Cohen&Goldszmidt, kull2017betacalibration}. How to best achieve
good calibration and how to measure it are active research areas 
\citep{boken2021appropriateness, roelofs2020mitigating}.

In this paper, we argue that for attaining good calibration of a probabilistic
classifier the so-called `grouping loss' \citep{kull2014reliability} must not be 
neglected. We show that grouping loss reflects an information gap that
is caused by not fully exploiting the information provided by the features when
learning the probabilistic classifier.

Most of the bits and pieces needed for a thorough discussion of grouping loss are available
in the literature:
\begin{itemize} 
\item Sufficiency and admissibility (see \cite{devroye1996probabilistic}, Section~32.3)
\item Probing reduction \citep{langford2005estimating}
\item Brier curves \citep{hernandez2011brier}
\end{itemize}
In this paper, we put these pieces together and explain how they complement each other. 
The paper, in particular, presents the following new contributions to the subject:
\begin{itemize}
\item We point out that the common practice of first developing a so-called scoring
classifier and then calibrating it to obtain a probabilistic classifier risks loss of accuracy 
(as measured by the Brier score)
unless the scoring classifier is strongly comonotonic with the posterior probability of the
positive class (Proposition~\ref{pr:comon}).
\item We extend the findings on sufficiency by \cite{brocker2009reliability} 
(Proposition~\ref{pr:broecker}) and present a counter-example to the `if'-part of Theorem~10.2 of 
\cite{schervish1989general} (Example~\ref{ex:counter}).
\item We provide a more general representation of the `probing predictor' of  
\cite{langford2005estimating} as well as a rigorous proof of their Theorem~2 (Theorem~\ref{th:Langford}
below).
\item We determine, in full generality, the point where the Brier curve of a calibrated probabilistic classifier
takes its maximum (Proposition~\ref{pr:properties}) and derive a novel lower bound 
for the Brier score associated with a binary regression problem (Proposition~\ref{pr:cov}).
\item Following \cite{HolzmannInformation2014}, we argue that the concept of 
$\sigma$-fields from measure theory is best-suited for
analysing the role of information in the calibration of probabilistic classifiers. 
\end{itemize}

We set the stage for the paper in Section~\ref{se:stage} where we also state more formally the problem.
In Sections~\ref{se:sufficiency}, \ref{se:probing} and \ref{se:curves} respectively, then
the concept of sufficiency, probing reduction, and Brier curves are discussed in detail.
Section~\ref{se:recom} provides a recommendation of how to avoid or at least control
grouping loss, or in other words how to achieve `sufficient calibration'. 
The paper concludes with a summary of the findings in Section~\ref{se:conclusions}.
Appendix~\ref{se:proofs}
provides proofs for some of the results of the paper.


\section{Setting the stage}
\label{se:stage}

We follow the examples of \cite{reid2011information}, \cite{zhao2013beyond} and others
in studying the subject of this paper at population level -- as opposed to sample level. 
For the reason to do
so, see \cite{zhao2013beyond} who state in Section~2.2 `To remove 
finite sample effects, we consider a data set of infinite size and hence the asymptotic
expressions of error rate and balanced error rate.' 
Refer also to Appendix~B.1 of \cite{zhao2013beyond} for their rationale to deploy measure-theoretic
formalism to classification problems.

\subsection{Notation and setting}
\label{se:notation}

We use the following population-level description of the binary classification problem.
See standard textbooks on probability theory like \cite{BauerProbTheory} for
formal definitions and background of the notions introduced in Assumption~\ref{as:setting}.
\begin{assumption}\label{as:setting}
$(\Omega, \mathcal{A}, P)$ is a probability space. An event $A \in \mathcal{A}$ with $0 < P[A] < 1$ and
a sub-$\sigma$-field $\mathcal{H} \subset \mathcal{A}$ with $A \notin \mathcal{H}$ are fixed.
\end{assumption}
\noindent\textbf{Interpretation}
\begin{itemize} 
\item The elements $\omega$ of $\Omega$ are objects (or instances) with class and feature properties. 
$\omega \in A$ means that $\omega$ belongs to class 1 (or the positive class). 
$\omega \in A^c = \Omega\setminus A$ 
means that $\omega$ belongs to class 0 (or the negative class).
\item The $\sigma$-field $\mathcal{A}$ of events $M \in \mathcal{A}$ is a collection of subsets $M$
of $\Omega$ with the property that they can be assigned probabilities $P[M]$ in a logically consistent way.
\item Binary classification problem: The sub-$\sigma$-field $\mathcal{H} \subset \mathcal{A}$ 
contains the events which are observable
at the time when the class of an object $\omega$ has to be predicted. Since $A \notin \mathcal{H}$, then
the class of an object is not known. It can only be inferred on the basis of the events $H \in \mathcal{H}$
which are assumed to reflect the features of the instance.
\end{itemize}

Reconciliation with the standard setting for binary classification in the machine learning and pattern 
recognition literature (see e.g.~\citealp{devroye1996probabilistic}): 
\begin{itemize} 
\item Typically a random vector $(X, Y)$ is studied, where $X$ stands for the features of an instance 
and $Y$ stands for its class. $X$ is assumed to take values in a feature space $\mathfrak{X}$ (often
$\mathfrak{X} = \mathbb{R}^d$) while 
$Y$ takes either the value 0 (or $-1$) or the value 1 (for the positive class).
\item Standard formulation of the binary classification problem: 
Infer the value of $Y$ from $X$ or make
an informed decision on the occurence or non-occurrence of $Y=1$ despite only being able 
to observe the values of $X$.
\item This is captured by the measure-theoretic setting of Assumption~\ref{as:setting}: 
Assume that $X$ and $Y$ map $\Omega$ into $\mathfrak{X}$ and $\{0,1\}$ respectively. 
Choose $\mathcal{H} = \sigma(X)$ (the
smallest sub-$\sigma$-field of $\mathcal{A}$ such that $X$ is measurable) and 
$A = \{Y=1\} = \{\omega\in\Omega: Y(\omega)=1\}$.
\item Often no probability space is specified but only a sample $(x_1, y_1), \ldots, (x_n, y_n)$ of
realisations of $(X,Y)$ is
considered. Usually this sample is assumed to have been generated by i.i.d.\ drawings from
some population distribution which may be identified with $(\Omega, \mathcal{A}, P)$ as described above.
\end{itemize}

\subsection{Posterior class probabilities}

The probability of $A$ conditional on $\mathcal{H}$ (or posterior probability of the positive class) 
$\Psi = P[A\,|\,\mathcal{H}]$ 
is a central concept for binary classification and related areas, for the following reasons:
\begin{itemize} 
\item As a consequence of Proposition~\ref{pr:brier} below on the decomposition of the Brier Score
into refinement and calibration loss, $\Psi$ is the best predictor of $A$ in the sense of 
minimising the mean squared error.
\item `Thresholding' $\Psi$ produces the Bayes classifiers which minimise the Bayes error and cost-sensitive
Bayes error respectively (\citealp{vanTrees}, p.~27; 
\citealp{devroye1996probabilistic}, Theorem~32.4).
\item More generally, thresholding $\Psi$ also represents the best basis for many decision criteria 
for label assignments (classification) based on the confusion matrix
    (\citealp{koyejo2014consistent},  Theorem~2).
\item $\Psi$ is best for statistical tests and binary classification based on Neyman-Pearson criteria 
    \citep{Scott2019}.
\item $\Psi$ is an optimum solution to the bipartite ranking problem (\citealp{menon2016bipartite}, Proposition~63).
\item From the perspective of practice, estimates of $\Psi$ may serve as plug-in classifiers
(\citealp{devroye1996probabilistic}, Section~2.5).
\end{itemize}
We refer to Section~4.1 of \cite{Durrett} for the formal definitions and properties of
\begin{itemize}
\item expectation $E[Z\,|\,\mathcal{H}]$ of a random variable $Z$ conditional on $\mathcal{H}$, and
\item probability $P[A\,|\,\mathcal{H}]$ of $A$ conditional on $\mathcal{H}$ respectively.
\end{itemize}
In the machine learning literature, often the term \emph{posterior class probability} rather than
conditional probability is used to refer to $P[A\,|\,\mathcal{H}]$, as opposed to the \emph{prior
probability} $P[A]$ which in our setting would rather be called unconditional probability of $A$.
For a more concise notation, in the following we will sometimes write $\Psi$ or $\Psi_{\mathcal{H}}$
for $P[A\,|\,\mathcal{H}]$.
\begin{remark}[$\Psi$ vs.\ $\eta$]\emph{%
In the machine learning and pattern recognition literature, the posterior class probability often is
denoted by $\eta$. See for instance Section~2.1 of \cite{devroye1996probabilistic} who
introduced it as `regression of $Y$ on $X$'. If the sub-$\sigma$-field $\mathcal{H}$ from
Assumption~\ref{as:setting} is generated by a features random vector $X$ with values
in a measurable space $\mathfrak{X}$, i.e.\ $\mathcal{H}
= \sigma(X)$, then the factorisation lemma (\citealp{BauerProbTheory}, Section~15) 
implies $\Psi = P[A\,|\,\sigma(X)] = \eta(X)$
for some measurable function $\eta: \mathfrak{X} \to [0,1]$. This function $\eta$ is 
the posterior probability (or regression function) \citep{devroye1996probabilistic}
and many other authors are referring to. In this paper, the slightly different
$\Psi$-concept
is preferred because it helps avoiding frequent changes of variables in expectations and
integrals.
} \hfill \qed
\end{remark}

\subsection{The Brier score and its components}

The Brier score is an important tool for the analysis and practice of 
binary classification:
\begin{itemize} 
\item As shown below, the Brier score plays an important role in the theoretical
analysis of estimators of the posterior class probability $\Psi$.
\item By its definition as a mean squared error, representations of the Brier score and its
components sometimes can be directly deployed for parametric or non-parametric regression. 
\end{itemize}
See Chapter~6 of \cite{Hand97} for further background information on the Brier score.

\begin{definition}[Brier score]\label{de:brier}
Under Assumption~\ref{as:setting}, the \emph{Brier score} $BS(Z)$ (with 
respect to the event $A$) of a square-integrable $\mathcal{A}$-measurable
random variable $Z$ is defined as
$$BS(Z) \ = \ E[(\mathbf{1}_A - Z)^2],$$
where $\mathbf{1}_A$ with $\mathbf{1}_A(\omega) = 
\begin{cases} 1, & \omega \in A\\ 0, & \omega \not\in A
\end{cases}$ denotes
the indicator function of $A$.
\end{definition}

\begin{proposition}[Decomposition of Brier score]\label{pr:brier}
Denote by $\Psi_{\mathcal{H}}$ the probability of $A$ conditional on $\mathcal{H}$. Assume $Z$ is 
a square-integrable $\mathcal{H}$-measurable random variable (interpreted as estimator of $A$ or
\emph{probabilistic classifier}). 
Then $BS(Z)$ can be represented as sum of \emph{refinement loss} 
$E[\Psi_{\mathcal{H}}\,(1-\Psi_{\mathcal{H}})]$
and \emph{calibration loss} $E[(\Psi_{\mathcal{H}} - Z)^2]$:
\begin{subequations}
\begin{equation}\label{eq:decomp}
BS(Z) \ = \ E[\Psi_{\mathcal{H}}\,(1-\Psi_{\mathcal{H}})] + 
    E[(\Psi_{\mathcal{H}} - Z)^2].
\end{equation}
Assume $\mathcal{G}$ is another sub-$\sigma$-field of $\mathcal{A}$ such that
$\mathcal{G}\subset\mathcal{H}$ and define $\Psi_{\mathcal{G}} = P[A\,|\,\mathcal{G}]$. 
If $Z$ is $\mathcal{G}$-measurable then decomposition~\eqref{eq:decomp} can be 
modified to
\begin{equation}\label{eq:refined}
BS(Z) \ = \ E[\Psi_{\mathcal{H}}\,(1-\Psi_{\mathcal{H}})] + E[(\Psi_{\mathcal{H}} - \Psi_{\mathcal{G}})^2]
    + E[(\Psi_{\mathcal{G}} - Z)^2].
 \end{equation} 
In particular, it holds that
\begin{equation}\label{eq:grouping}
E[\Psi_{\mathcal{G}}\,(1-\Psi_{\mathcal{G}})] \ = \ E[\Psi_{\mathcal{H}}\,(1-\Psi_{\mathcal{H}})] 
    + E[(\Psi_{\mathcal{H}} - \Psi_{\mathcal{G}})^2].
\end{equation}
The term $E[(\Psi_{\mathcal{H}} - \Psi_{\mathcal{G}})^2]$ is called \emph{grouping loss}.
\end{subequations}
\end{proposition}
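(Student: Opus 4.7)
The plan is to establish the three displays in the order \eqref{eq:decomp}, \eqref{eq:grouping}, \eqref{eq:refined}. The first is proved from scratch using an add-and-subtract trick together with the defining property $E[\mathbf{1}_A\,|\,\mathcal{H}] = \Psi_{\mathcal{H}}$; the other two then follow by reusing \eqref{eq:decomp} twice with different choices of the conditioning $\sigma$-field and of $Z$.

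For \eqref{eq:decomp} I would decompose $\mathbf{1}_A - Z = (\mathbf{1}_A - \Psi_{\mathcal{H}}) + (\Psi_{\mathcal{H}} - Z)$, square, and take expectations. The only non-trivial step is to show that the cross term $E\bigl[(\mathbf{1}_A - \Psi_{\mathcal{H}})(\Psi_{\mathcal{H}} - Z)\bigr]$ vanishes. Since both $\Psi_{\mathcal{H}}$ and (by hypothesis) $Z$ are $\mathcal{H}$-measurable, I can pull $\Psi_{\mathcal{H}} - Z$ out of the conditional expectation; the remaining inner factor $E[\mathbf{1}_A - \Psi_{\mathcal{H}}\,|\,\mathcal{H}]$ is zero by the definition of $\Psi_{\mathcal{H}}$. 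The leading term $E[(\mathbf{1}_A - \Psi_{\mathcal{H}})^2]$ I would simplify using $\mathbf{1}_A^2 = \mathbf{1}_A$ together with the identity $E[\mathbf{1}_A\,\Psi_{\mathcal{H}}] = E[\Psi_{\mathcal{H}}^2]$, another instance of the tower property applied to the $\mathcal{H}$-measurable factor $\Psi_{\mathcal{H}}$, yielding the refinement loss $E[\Psi_{\mathcal{H}}(1-\Psi_{\mathcal{H}})]$.

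For \eqref{eq:grouping} I would observe that because $\mathcal{G}\subset\mathcal{H}$ the random variable $\Psi_{\mathcal{G}}$ is $\mathcal{H}$-measurable, so \eqref{eq:decomp} is applicable with $Z = \Psi_{\mathcal{G}}$ and delivers $E[(\mathbf{1}_A - \Psi_{\mathcal{G}})^2] = E[\Psi_{\mathcal{H}}(1-\Psi_{\mathcal{H}})] + E[(\Psi_{\mathcal{H}} - \Psi_{\mathcal{G}})^2]$; rerunning the refinement-loss simplification from the previous paragraph with $\mathcal{G}$ in place of $\mathcal{H}$ rewrites the left-hand side as $E[\Psi_{\mathcal{G}}(1-\Psi_{\mathcal{G}})]$. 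Finally, for \eqref{eq:refined} I would invoke \eqref{eq:decomp} once more, this time with $\mathcal{H}$ replaced by $\mathcal{G}$ (valid because $Z$ is assumed $\mathcal{G}$-measurable), obtaining $BS(Z) = E[\Psi_{\mathcal{G}}(1-\Psi_{\mathcal{G}})] + E[(\Psi_{\mathcal{G}} - Z)^2]$, and then substitute \eqref{eq:grouping} into the first summand.

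I do not anticipate a genuine obstacle: everything reduces to careful bookkeeping about which random variables are measurable with respect to which $\sigma$-field, so that the relevant cross terms collapse via the tower property. The only pitfall is making sure in the second step that $\Psi_{\mathcal{G}}$ really qualifies as a valid $Z$ for the version of \eqref{eq:decomp} proved in step one; this is the sole place where the inclusion $\mathcal{G}\subset\mathcal{H}$ is actually used.
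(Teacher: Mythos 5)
Your argument is correct. The proof of \eqref{eq:decomp} is exactly the paper's: expand around $\Psi_{\mathcal{H}}$, kill the cross term by pulling the $\mathcal{H}$-measurable factor $\Psi_{\mathcal{H}}-Z$ inside the conditional expectation, and simplify the leading term via $E[\mathbf{1}_A\,\Psi_{\mathcal{H}}]=E[\Psi_{\mathcal{H}}^2]$. Where you diverge is in the order and mechanism for the remaining two identities. The paper proves \eqref{eq:refined} directly by a second orthogonal expansion, decomposing $E[(\Psi_{\mathcal{H}}-Z)^2]$ around $\Psi_{\mathcal{G}}$ and killing a second cross term via $E[\Psi_{\mathcal{H}}\,|\,\mathcal{G}]=\Psi_{\mathcal{G}}$; it then obtains \eqref{eq:grouping} by subtracting \eqref{eq:decomp} from \eqref{eq:refined}. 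You reverse this: you get \eqref{eq:grouping} by evaluating $BS(\Psi_{\mathcal{G}})$ in two ways (once via \eqref{eq:decomp} with $Z=\Psi_{\mathcal{G}}$, once via \eqref{eq:decomp} with $\mathcal{G}$ in place of $\mathcal{H}$), and then deduce \eqref{eq:refined} by substitution. Your route avoids any second explicit orthogonality computation, at the price of having to note that \eqref{eq:decomp} is really a statement valid for an arbitrary sub-$\sigma$-field and that $\Psi_{\mathcal{G}}$ is an admissible $Z$ precisely because $\mathcal{G}\subset\mathcal{H}$ -- a point you correctly flag as the sole use of the nesting hypothesis. Both arguments rest on the same tower-property mechanism, so this is a reorganisation rather than a new idea, but it is a clean and slightly more economical one.
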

\begin{proof} Regarding \eqref{eq:decomp} a short computation yields
\begin{align*}
E[(\mathbf{1}_A - Z)^2] & \ = \ E[(\mathbf{1}_A - \Psi_{\mathcal{H}})^2] + 
    2\,E[(\mathbf{1}_A - \Psi_{\mathcal{H}})\,(\Psi_{\mathcal{H}}-Z)] +
    E[(\Psi_{\mathcal{H}}-Z)^2]\\
& \ = \  P[A] - 2\,E[\mathbf{1}_A\,\Psi_{\mathcal{H}}] + E[\Psi_{\mathcal{H}}^2] \\
& \qquad + 2\,E\bigl[(P[A\,|\,\mathcal{H}]-\Psi_{\mathcal{H}})\,(\Psi_{\mathcal{H}}-Z)\bigr] 
    + E[(\Psi_{\mathcal{H}}-Z)^2]\\
& \ = \ P[A] - E[\Psi_{\mathcal{H}}^2] + E[(\Psi_{\mathcal{H}}-Z)^2]\\
& \ = \ E[\Psi_{\mathcal{H}}\,(1-\Psi_{\mathcal{H}})] + 
    E[(\Psi_{\mathcal{H}} - Z)^2].
\end{align*}
\eqref{eq:refined} follows by straightforward application of the tower property of conditional expectation:
\begin{align*}
E[(\Psi_{\mathcal{H}} - Z)^2] & \ = \ E[(\Psi_{\mathcal{H}} - \Psi_{\mathcal{G}})^2] +
    2\,E\bigl[(\Psi_{\mathcal{H}} - \Psi_{\mathcal{G}})\,(\Psi_{\mathcal{G}} - Z)\bigr] +
    E[(\Psi_{\mathcal{G}} - Z)^2]\\
& \ = \  E[(\Psi_{\mathcal{H}} - \Psi_{\mathcal{G}})^2] + E[(\Psi_{\mathcal{G}} - Z)^2]\\
& \qquad + 2\,E\bigl[(E[\Psi_{\mathcal{H}}\,|\,\mathcal{G}] - \Psi_{\mathcal{G}})\,
    (\Psi_{\mathcal{G}} - Z)\bigr]. 
\end{align*}
\eqref{eq:grouping} immediately follows from combining \eqref{eq:decomp} and \eqref{eq:refined}.
\end{proof}

Some comments on Proposition~\ref{pr:brier}:
\begin{itemize} 
\item Proposition~\ref{pr:brier} implies that the probability of $A$ conditional on $\mathcal{H}$ is
the best estimate of $A$ given the information reflected by $\mathcal{H}$, in the sense of minimising
the mean-squared error. Thus Proposition~\ref{pr:brier} provides an alternative characterisation
of conditional probabilities.
\item \cite{Hand97} dealt with
the multi-class case of \eqref{eq:decomp} and called
$E\bigl[(\mathbf{1}_A - Z)^2\bigr] + E\bigl[(\mathbf{1}_{A^c} - (1-Z))^2\bigr]
= 2\,E\bigl[(\mathbf{1}_A - Z)^2\bigr]$ \emph{Brier inaccuracy}. 
\item In the context
of binary classification, the Brier Score
is sometimes simply referred to as \emph{mean squared error} (\citealp{hernandez2012unified}, Definition~8). 
\item \cite{Hand97}, Section~6.5, called the term $E[\Psi\,(1-\Psi)]$ \emph{inseparability} and
the term $E[(\Psi - Z)^2]$ \emph{imprecision}.
\item We follow \cite{hernandez2012unified}, p.~2841, with the terms
refinement loss for $E[\Psi\,(1-\Psi)]$ and
calibration loss for $E[(\Psi - Z)^2]$ respectively.
\item \cite{kull2014reliability}, p.~22, introduced the concept of 
`grouping loss'. With a view on the fact that the $\sigma$-fields $\mathcal{H}$ and $\mathcal{G}$
reflect information that is utilised for estimating the respective conditional probabilities of
$A$, it would also seem appropriate to call the term $E[(\Psi_{\mathcal{H}} - \Psi_{\mathcal{G}})^2]$
\emph{information gap loss}. \cite{kull2014reliability} called the Brier score component
$E[(\Psi_{\mathcal{G}} - Z)^2]$ \emph{group-wise calibration loss}.
\item  Note the following alternative representation of the refinement loss:
\begin{equation}\label{eq:alternative}
E[\Psi\,(1-\Psi)] \ = \ P[A]\,(1-P[A]) - \var[\Psi]\ = \ \var[\mathbf{1}_A] - \var[\Psi].
\end{equation}
$\var[\mathbf{1}_A] = P[A]\,(1-P[A])$ is called \emph{uncertainty} while $\var[\Psi]$ is called \emph{resolution}.
\end{itemize}
Minimisation of the refinement loss in general
is a matter of feature selection which is comprehensively covered in the literature
\citep{chandrashekar2014features}.
Similarly, once feature selection has been completed and resulted in a fixed $\sigma$-field 
$\mathcal{H}$, reducing calibration loss by transforming scoring classifiers into estimators
of the true posterior probabilities given the scoring classifiers is a topic
treated extensively in the literature (for a survey see \citealp{kull2017betacalibration}, Section~1).

However, as already observed by \cite{Murphy&Winkler}, the fact that the group-wise 
calibration loss $E[(\Psi_{\mathcal{G}} - Z)^2]$ vanishes does not necessarily imply
that the entire calibration loss is zero because the grouping loss can still be positive.
\cite{kull2014reliability}, p.~22, advised to `train a new model'
in order to reduce the grouping loss portion of the calibration loss.
In this paper, we focus attention to the grouping loss and to some suggestions of how to avoid it
in the first place.

\subsection{Classifier calibration and grouping loss}
\label{se:calibration}

We study the connection between the following two approaches to the problem of estimating 
$\Psi = P[A\,|\,\mathcal{H}]$ under Assumption~\ref{as:setting} 
(cf.~also \citealp{menon2012predicting}, Sections 3.1 and 3.2):
\begin{enumerate} 
\item \label{it:direct} Directly estimate $P[A\,|\,\mathcal{H}]$ from the observed data. Most of the time, this is 
a hard problem requiring huge datasets,
primarily due to the curse of dimensionality if one wants to exploit the full information available in $\mathcal{H}$
since $\mathcal{H}$ typically represents observations in a multidimensional space
(see, for instance, \citealp{Hand97}, Chapter~5).
\item \label{it:uni} Find a \emph{scoring classifier}
(\citealp{hernandez2011brier}, Section~2.1) $S$, i.e.\ a real-valued 
$\mathcal{H}$-measu\-rable random variable, 
such that high values of $S$ reflect strong confidence that $A$ has occurred and low values reflect
weak confidence that $A$ has occurred.
Such random variables are sometimes also called \emph{confidence scores} in the literature
\citep{roelofs2020mitigating}.
Then estimate $\Psi_{\sigma(S)} = P[A\,|\,\sigma(S)]$. This is an easier task because it basically means to
perform regression on one real-valued variable. In the machine learning literature 
such estimation exercises are called \emph{calibration} of $S$ \citep{zadrozny2001obtaining}. 
\end{enumerate}

Approach~\ref{it:uni} actually means post-processing the result of a previous supervised learning exercise,
for instance learning a support vector machine (SVM) or any other binary classification method that outputs a
scoring classifier. Approach~\ref{it:uni} therefore refers to a 
procedure with two steps which, however, in general both are considered to be easier and more
efficient than approach~\ref{it:direct}.

Assume that feature selection has resulted in a fixed maximum amount of usable information, measured
by the $\sigma$-field $\mathcal{H}$. Then the refinement loss component in \eqref{eq:decomp}
is a constant and, at the same time, the theoretical minimum estimation error that can be achieved
when predicting the positive class event $A$ based on the information provided by $\mathcal{H}$.
Applying the direct approach~\ref{it:direct} is equivalent to trying to minimise the calibration
loss component $E[(\Psi_{\mathcal{H}} - Z)^2]$ in \eqref{eq:decomp}.

In contrast, the second step of approach~\ref{it:uni} -- the calibration of $S$ -- is better described
by \eqref{eq:refined}, with $\mathcal{G} = \sigma(S)$. The minimum achievable estimation error (refinement
loss) is the same as for approach~\ref{it:direct} but the calibration procedure is dealing only
with the calibration loss portion
\begin{equation*}
E[(\Psi_{\mathcal{G}} - Z)^2]\ = \ E[(\Psi_{\sigma(S)} - Z)^2].
\end{equation*}
Indeed, measuring the goodness of calibration in the shape of $E[(\Psi_{\sigma(S)} - Z)^2]$ is
an active research area (see \citealp{roelofs2020mitigating} for a recent example).

As mentioned before, \cite{kull2014reliability} identified the grouping loss 
as a factor that impacts the total calibration loss (`instance-wise calibration loss' in their words).
\cite{kull2014reliability} observed that
\begin{equation}\label{eq:condvar}
E[(\Psi_{\mathcal{H}} - \Psi_{\sigma(S)})^2] \ =\ E\bigl[\var[\Psi_{\mathcal{H}}\,|\,\sigma(S)]\bigr].
\end{equation}
They proposed a local regression approach for estimating $\var[\Psi_{\mathcal{H}}\,|\,\sigma(S)]$ 
(more precisely a so-called \emph{reliance map} based on this conditional variance) as a 
measure of reliability of their estimate of $\Psi_{\sigma(S)}$ as substitute of $\Psi_{\mathcal{H}}$.
This reliability measure then was used to improve an estimator for multi-class posterior
probabilities.

In the remainder of this paper, we 
\begin{itemize} 
\item revisit in Section~\ref{se:sufficiency}  the concept of sufficiency and its relation to
 the property that the grouping loss completely vanishes,
\item revisit in Section~\ref{se:probing} probing prediction \citep{langford2005estimating} as a tool to reduce
both components of the calibration loss at the same time, and also
\item  discuss in Section~\ref{se:curves} the role Brier curves \citep{hernandez2011brier} could play in controlling 
the grouping error if sufficiency cannot be achieved.
\end{itemize}


\section{Sufficiency}
\label{se:sufficiency}

In this section, we define a concept of sufficiency which is appropriate for the context of 
binary classification. At first glance, it differs from the more familiar statistical sufficiency with 
respect to a parametrised
family of distributions as defined for instance in \cite{Casella&Berger}. See 
Section~\ref{se:different} below for comments on the connections between `statistical
sufficiency' and sufficiency in the sense of the following Definition~\ref{de:sufficiency}. 

\subsection{Sufficiency and admissibility}
\label{se:admissible}

\cite{devroye1996probabilistic}, Definition~32.2, defined `sufficiency' in terms of random
variables $X$ and $Y$ and called a mapping $T$ with the image of $X$ as its domain a 
\emph{sufficient statistic} if for `for any set $A$, 
$P\{Y \in A\,|\,T(X), X\} = P\{Y\in A\,|\,T(X)\}$'. Their definition of sufficiency thus
is a special case of the following definition, with $\mathcal{M} = \sigma(Y)$, 
$\mathcal{H} = \sigma(X)$, and $\mathcal{G} = \sigma(T(X))$. 

\begin{definition}\label{de:sufficiency}
Let $(\Omega, \mathcal{A}, P)$ be a probability space and $\mathcal{M}$, $\mathcal{G}$ and $\mathcal{H}$ 
be sub-$\sigma$-fields of $\mathcal{A}$ such that $\mathcal{G} \subset \mathcal{H}$.
Then $\mathcal{G}$ is \emph{sufficient} for $\mathcal{H}$ 
with respect to $\mathcal{M}$ if for all $M\in \mathcal{M}$
\begin{equation}\label{eq:s.general}
 P[M\,|\,\mathcal{H}]\ = \ P[M\,|\,\mathcal{G}].
\end{equation} 
\end{definition}
\begin{remark}\label{rm:sufficiency}\emph{%
Under Assumption~\ref{as:setting}, let $\Psi = P[A\,|\,\mathcal{H}]$ as well as
\begin{equation}\label{eq:ex.sufficient}
\mathcal{G} \ = \ \sigma(\Psi) \quad \text{and}\quad 
    \mathcal{M} \ = \ \sigma(\{A\}) \ = \ \{\emptyset, \Omega, A, A^c\}.
 \end{equation}
Then it is easy to show that $\mathcal{G}$ is sufficient for $\mathcal{H}$  
with respect to $\mathcal{M}$. In applications, most of the time 
 $\sigma(\Psi) \subsetneqq \mathcal{H}$ will hold because $\Psi$ is one-dimensional 
 while $\mathcal{H}$ is likely to be generated by a multi-dimensional random vector. 
Hence it is indeed noteworthy that \eqref{eq:s.general} is true in this context.
}\hfill \qed
\end{remark}
As already observed by \cite{vanTrees}, page~29, Remark~\ref{rm:sufficiency} 
reflects the fact that $\Psi = P[A\,|\,\mathcal{H}]$
captures all the information contained in $\mathcal{H}$ that is relevant for inference regarding the 
event $A$. \cite{vanTrees} called this property sufficiency
without referring to any specific definition of the term. 

The question is debatable if the nesting condition $\mathcal{G} \subset \mathcal{H}$ should be part of a
definition of sufficiency as in Definition~\ref{de:sufficiency}. The author chose to include the
condition because inclusion comes at no cost for the discussion of calibration, 
the main topic of this paper. Nonetheless, definitions of sufficiency without nesting 
can make sense in other contexts. In Section~\ref{se:different} below, we also discuss the more general approach
to sufficiency by \cite{brocker2009reliability}.

At first glance, Definition~\ref{de:sufficiency} might not appear to be very helpful
when it comes to controlling the grouping loss as defined in Proposition~\ref{pr:brier}.
Indeed, under Assumption~\ref{as:setting}, Eq.~\eqref{eq:s.general} is true for $M = A$ if and only
if the grouping loss $E[(\Psi_{\mathcal{H}} - \Psi_{\mathcal{G}})^2]$ vanishes. 
Hence,
there is a condition for the grouping loss to be zero, but we have not yet got criteria
for the condition to apply. So far, by Remark~\ref{rm:sufficiency} we know that
a sufficient sub-$\sigma$-field of $\mathcal{H}$ exists. But, given that $\Psi = P[A\,|\,\mathcal{H}]$
is elusive, that observation does not really help.

At this point, it is useful to recall the notion of cost-sensitive learning and the related
extended notion of Bayes classifier -- see Section~32.3 of \cite{devroye1996probabilistic} 
for the details. Theorem~32.4 of \cite{devroye1996probabilistic}
suggests that study of the solutions to cost-sensitive classification problems could
provide further information for the problem of estimating posterior class probabilities.
For this purpose, we introduce the following definition.

\begin{definition}\label{de:Bayes}
Under Assumption~\ref{as:setting}, for $H\in \mathcal{H}$ and $t \in [0,1]$ the \emph{cost-weighted
mean loss} $L(H,t)$ is defined as
\begin{subequations}
\begin{equation}\label{eq:B}
L(H,t) \ =\ (1-t)\,P[A\cap H^c] + t\,P[A^c\cap H].
\end{equation}
The \emph{cost-weighted Bayes loss} $L_{\mathcal{H}}^\ast(t)$ is defined as
\begin{equation}\label{eq:Bstar}
\begin{split}
L_{\mathcal{H}}^\ast(t) & \ = \ \inf\limits_{H\in\mathcal{H}} L(H, t)\\
 & \ = \ L\bigl(\{P[A\,|\,\mathcal{H}]>t\}, t\bigr).
\end{split}
\end{equation}
\end{subequations}
\end{definition}

With the notation of Definition~\ref{de:Bayes}, sufficiency can be characterised as follows:
\begin{theorem}\label{th:devroye}
Under Assumption~\ref{as:setting}, let $\mathcal{G}$ denote a sub-$\sigma$-field
of $\mathcal{H}$ (hence $\mathcal{G}\subset\mathcal{H}$). With the cost-weighted
Bayes loss $L_{\mathcal{H}}^\ast(t)$  defined as in \eqref{eq:Bstar},
then $\mathcal{G}$ is sufficient for $\mathcal{H}$ with respect to $\sigma(\{A\})$ 
(or simply $A$) if and only if 
for all $t \in (0,1)$
\begin{equation}\label{eq:admissible}
L_{\mathcal{H}}^\ast(t)\ =\ L_{\mathcal{G}}^\ast(t).
\end{equation}
\end{theorem}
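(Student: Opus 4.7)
My plan is to reduce the whole statement to a claim about the function $t \mapsto E[(X-t)^+]$ applied to the posteriors $\Psi = P[A\,|\,\mathcal{H}]$ and $\Phi = P[A\,|\,\mathcal{G}]$. The first step is to derive a closed-form expression for the cost-weighted Bayes losses. Using $\mathbf{1}_{H^c} = 1-\mathbf{1}_H$ and conditioning on $\mathcal{H}$ (resp.\ $\mathcal{G}$), the loss in \eqref{eq:B} rewrites as $L(H,t) = (1-t)\,P[A] + E[\mathbf{1}_H\,(t-\Psi)]$ for $H\in\mathcal{H}$, and analogously with $\Phi$ in place of $\Psi$ for $H\in\mathcal{G}$. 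Following \eqref{eq:Bstar}, the infimum is attained at the threshold set $\{\Psi>t\}$ (resp.\ $\{\Phi>t\}$), which leads to the identities
\begin{equation*}
L_{\mathcal{H}}^\ast(t) \ = \ (1-t)\,P[A] - E[(\Psi-t)^+], \qquad L_{\mathcal{G}}^\ast(t) \ = \ (1-t)\,P[A] - E[(\Phi-t)^+].
\end{equation*}

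Because $\sigma(\{A\}) = \{\emptyset, \Omega, A, A^c\}$, sufficiency of $\mathcal{G}$ for $\mathcal{H}$ with respect to $\sigma(\{A\})$ is nothing other than the almost sure equality $\Psi = \Phi$. The ``only if'' direction of Theorem~\ref{th:devroye} is then immediate: if $\Psi = \Phi$ almost surely, the two displayed formulas coincide for every $t\in(0,1)$.

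For the ``if'' direction, the plan is to assume $L_{\mathcal{H}}^\ast(t) = L_{\mathcal{G}}^\ast(t)$ on $(0,1)$, equivalently $E[(\Psi-t)^+] = E[(\Phi-t)^+]$ there. Since $\Psi,\Phi$ take values in $[0,1]$, this equality extends trivially to $t\le 0$ (both sides are $P[A]-t$) and to $t\ge 1$ (both sides vanish), hence it holds on all of $\mathbb{R}$. I would then invoke the standard representation $E[(X-t)^+] = \int_t^\infty P[X>s]\,ds$ and the right-continuity of the tail functions $t\mapsto P[\Psi>t]$, $t\mapsto P[\Phi>t]$ to upgrade almost-everywhere equality of derivatives to equality everywhere, yielding $\Psi \ \stackrel{d}{=}\ \Phi$ and in particular $E[\Psi^2] = E[\Phi^2]$.

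To close the argument I would exploit the nesting $\mathcal{G}\subset\mathcal{H}$ via the tower property, which gives $\Phi = E[\Psi\,|\,\mathcal{G}]$, and then use orthogonality of conditional expectation to compute $E[(\Psi-\Phi)^2] = E[\Psi^2] - E[\Phi^2]$. Combined with the previous step this forces $E[(\Psi-\Phi)^2] = 0$, hence $\Psi = \Phi$ almost surely, as required. The main obstacle I anticipate is the passage from ``equal expected call payoffs for every threshold'' to ``equality in distribution''; once the integral representation and right-continuity are lined up, everything else is bookkeeping with conditional expectations.
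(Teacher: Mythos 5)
Your proof is correct, and it takes a genuinely different route from the paper's. The paper proves the converse direction by invoking Theorem~\ref{th:Langford} (probing reduction) with the choice $H(t)=\{P[A\,|\,\mathcal{G}]>t\}$, so that the probing predictor collapses to $Z=P[A\,|\,\mathcal{G}]$ and the regret integral $\int_0^1 L(H(t),t)-L^\ast_{\mathcal{H}}(t)\,dt$ vanishes under \eqref{eq:admissible}, forcing $E[(P[A\,|\,\mathcal{G}]-P[A\,|\,\mathcal{H}])^2]=0$. You instead work directly from the closed form $L^\ast(t)=(1-t)\,P[A]-E[(\,\cdot\,-t)^+]$, deduce $\Psi\stackrel{d}{=}\Phi$ from equality of the ``call prices'' at all strikes, and then use the nesting $\mathcal{G}\subset\mathcal{H}$ via the tower property and the Pythagorean identity $E[(\Psi-\Phi)^2]=E[\Psi^2]-E[\Phi^2]$ to conclude. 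Both arguments are sound; yours is more elementary and self-contained (it does not need the full strength of the probing-reduction theorem, whose proof occupies Appendix~\ref{se:Langford}), and it isolates very cleanly where the hypothesis $\mathcal{G}\subset\mathcal{H}$ is used --- precisely the point of Example~\ref{ex:counter}. Two small remarks: the passage through equality in distribution is a detour you do not actually need, since integrating the identity $E[(\Psi-t)^+]=E[(\Phi-t)^+]$ over $t\in(0,1)$ already yields $E[\Psi^2]=E[\Phi^2]$ directly (this is essentially \eqref{eq:area*} combined with \eqref{eq:grouping}); and the attainment of the infimum at the threshold set, which you derive, is built into \eqref{eq:Bstar} in the paper, so you may simply cite it. What the paper's approach buys in exchange is the thematic link to probing reduction and the fact that Proposition~\ref{pr:increasing} follows by the same one-line argument.
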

\begin{proof} The implication `sufficiency $\Rightarrow$ \eqref{eq:admissible}' is
obvious from \eqref{eq:Bstar}.

For the converse, assume that \eqref{eq:admissible} is true
for all $t \in (0,1)$. For $t \in (0,1)$ define $G(t) = \{P[A\,|\,\mathcal{G}] > t\}$.
Then as a consequence of \eqref{eq:admissible}, Theorem~\ref{th:Langford} below implies that
$$E\big[\bigl(Z - P[A\,|\,\mathcal{H}]\bigr)^2\bigr] \ 
    \le \ 2\,\int_0^1 L(G(t), t) - L_{\mathcal{H}}^\ast(t) \,dt\ = \ 0,$$
with $Z = \int_0^1 \mathbf{1}_{(0, P[A\,|\,\mathcal{G}])}(t)\,dt = P[A\,|\,\mathcal{G}]$.
This implies $P[A\,|\,\mathcal{H}] = P[A\,|\,\mathcal{G}]$. 
\end{proof}

In Example~\ref{ex:counter} of Section~\ref{se:different} below, we show that for the
implication `\eqref{eq:admissible} $\Rightarrow$ sufficiency' to be true, the nesting condition 
$\mathcal{G} \subset \mathcal{H}$ must be assumed to hold. Nesting of the $\sigma$-fields involved
comes naturally if Theorem~\ref{th:devroye} is phrased in terms of a transformation $T$ of the
features, as noted in the following corollary.
\begin{corollary}\label{co:admissible}
In the setting of Theorem~\ref{th:devroye}, assume there is a measurable mapping
$T:(\Omega, \mathcal{H}) \to (\Omega_T, \mathcal{A}_T)$ and let $\mathcal{G} = \sigma(T)$.
Then \eqref{eq:admissible} is equivalent to each of the following two properties:
\begin{itemize} 
\item[(i)] $\sigma(T)$ (or just $T$) is sufficient for $\mathcal{H}$ with respect to $A$, i.e.\ it
holds that $P[A\,|\,\mathcal{H}] = P[A\,|\,\sigma(T)]$.
\item[(ii)] There is a measurable function 
$G: (\Omega_T, \mathcal{A}_T) \to (\mathbb{R}, \mathcal{B}(\mathbb{R}))$
such $P[A\,|\,\mathcal{H}] = G(T)$.
\end{itemize}
\end{corollary}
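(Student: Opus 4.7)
The plan is to deduce the corollary directly from Theorem~\ref{th:devroye} combined with the factorisation lemma (Doob--Dynkin), since once the nesting $\sigma(T)\subset\mathcal{H}$ is established there is essentially nothing new to prove beyond translating between $\sigma(T)$-measurability and functions of $T$.

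First, I would observe that since $T$ is by hypothesis $\mathcal{H}$-measurable, it holds that $\mathcal{G} = \sigma(T) \subset \mathcal{H}$, so the nesting assumption of Theorem~\ref{th:devroye} is automatically satisfied. Applying that theorem with $\mathcal{G}=\sigma(T)$ then yields the equivalence between property~\eqref{eq:admissible} and the identity $P[A\,|\,\mathcal{H}] = P[A\,|\,\sigma(T)]$, which is statement~(i). No new work is required for this direction.

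Second, I would prove (i)\,$\Leftrightarrow$\,(ii) using the factorisation lemma (\citealp{BauerProbTheory}, Section~15). If (i) holds, then $P[A\,|\,\mathcal{H}] = P[A\,|\,\sigma(T)]$ is $\sigma(T)$-measurable, so the factorisation lemma provides a measurable $G:(\Omega_T,\mathcal{A}_T)\to(\mathbb{R},\mathcal{B}(\mathbb{R}))$ with $P[A\,|\,\mathcal{H}] = G(T)$, giving (ii). Conversely, assume (ii); then $P[A\,|\,\mathcal{H}] = G(T)$ is $\sigma(T)$-measurable, so the tower property (using $\sigma(T)\subset\mathcal{H}$) gives
\[
P[A\,|\,\sigma(T)] \;=\; E\bigl[P[A\,|\,\mathcal{H}]\,\big|\,\sigma(T)\bigr] \;=\; E[G(T)\,|\,\sigma(T)] \;=\; G(T) \;=\; P[A\,|\,\mathcal{H}],
\]
which is (i).

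There is no real obstacle here; the statement is essentially a restatement of Theorem~\ref{th:devroye} in terms of a feature transformation $T$. The only point requiring a little care is the remark that, when $T$ is $\mathcal{H}$-measurable, nesting $\sigma(T)\subset\mathcal{H}$ holds automatically, so the potential counter-example behaviour discussed in Example~\ref{ex:counter} cannot arise in this corollary's setup.
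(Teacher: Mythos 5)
Your proof is correct and follows essentially the same route as the paper: the equivalence of \eqref{eq:admissible} with (i) is obtained as a special case of Theorem~\ref{th:devroye} (with the nesting $\sigma(T)\subset\mathcal{H}$ automatic from $\mathcal{H}$-measurability of $T$), and (i)\,$\Leftrightarrow$\,(ii) follows from the factorisation lemma. You merely spell out the tower-property step for (ii)\,$\Rightarrow$\,(i), which the paper leaves implicit.
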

\begin{proof}
 The equivalence of \eqref{eq:admissible} and (i) a special case of Theorem~\ref{th:devroye}.
The equivalence of (i) and (ii) is a direct consequence of the factorisation lemma. 
\end{proof}

In Corollary~\ref{co:admissible}, the statement `\eqref{eq:admissible} $\Leftrightarrow$ (i)' is
Theorem~32.6 of \cite{devroye1996probabilistic}. The statement `\eqref{eq:admissible} $\Leftrightarrow$ (ii)'
is Theorem~32.5 of \cite{devroye1996probabilistic}. \cite{devroye1996probabilistic}
called the mapping $T$ \emph{admissible} if \eqref{eq:admissible} holds with $\mathcal{G} = \sigma(T)$ since
such mappings can be applied to transform the features without changing the achievable minimum mean loss.

Further criteria for sufficiency -- in addition to Theorem~\ref{th:devroye} --
are desirable as \eqref{eq:admissible} might be hard if not impossible to show in practice.

\begin{proposition}\label{pr:increasing}
In the setting of Corollary~\ref{co:admissible}, assume that there are an $\mathcal{H}$-measurable 
real-valued random variable $T$ and a strictly increasing and continuous
function $F:(0,1)\to I \subset \mathbb{R}$ (where $I$ may denote a finite or infinite open interval)
such that for all $t\in(0,1)$ it holds that 
\begin{equation}\label{eq:exact}
L_{\mathcal{H}}^\ast(t)\ = \ L\bigl(\{T > F(t)\}, t),
\end{equation}
with $L_{\mathcal{H}}^\ast$ and $L$ as in Definition~\ref{de:Bayes}.
Denote by $G$ the inverse function of $F$ (hence $G$ is also strictly increasing and 
continuous).
Then it follows that $P[A\,|\,\mathcal{H}] = G(T)$, 
and $T$ is sufficient for $\mathcal{H}$ with respect to $A$. 
\end{proposition}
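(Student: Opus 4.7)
The plan is to reduce the proposition to the probing-reduction inequality of Theorem~\ref{th:Langford}, exactly as in the converse implication of Theorem~\ref{th:devroye}. First I would translate hypothesis \eqref{eq:exact} into a statement about $G(T)$: since $F$ is a strictly increasing continuous bijection from $(0,1)$ onto $I$ with inverse $G$, the identity $\{T > F(t)\} = \{G(T) > t\}$ holds for every $t \in (0,1)$ once $G$ is extended continuously and monotonically to all of $\mathbb{R}$ with range in $[0,1]$ (sending values at or below $\inf I$ to $0$ and at or above $\sup I$ to $1$). Under this rewriting, \eqref{eq:exact} reads $L_{\mathcal{H}}^\ast(t) = L(\{G(T) > t\}, t)$ for every $t \in (0,1)$, with all the sets lying in $\mathcal{H}$ because $T$ is $\mathcal{H}$-measurable.

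Next I would feed the family $\bigl(\{G(T) > t\}\bigr)_{t \in (0,1)}$ into Theorem~\ref{th:Langford} the way it is used in the proof of Theorem~\ref{th:devroye}. With $Z := \int_0^1 \mathbf{1}_{\{G(T) > t\}}\,dt$, the probing inequality bounds $E\bigl[(Z - P[A\,|\,\mathcal{H}])^2\bigr]$ above by twice the integral over $(0,1)$ of the excess $L(\{G(T) > t\}, t) - L_{\mathcal{H}}^\ast(t)$, which vanishes identically by the preceding step. A layer-cake identification then gives $Z = G(T)$ (using $G(T) \in [0,1]$), so $P[A\,|\,\mathcal{H}] = G(T)$ almost surely, which is the first assertion of the proposition. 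Sufficiency of $T$ then comes for free: $G(T)$ is $\sigma(T)$-measurable, $\sigma(T) \subset \mathcal{H}$, and the tower property yields $P[A\,|\,\sigma(T)] = E\bigl[P[A\,|\,\mathcal{H}]\,\big|\,\sigma(T)\bigr] = G(T) = P[A\,|\,\mathcal{H}]$, which is Definition~\ref{de:sufficiency} applied with $\mathcal{G} = \sigma(T)$ and $\mathcal{M} = \sigma(\{A\})$.

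The one subtle point, and thus the main obstacle, is the bookkeeping around the range of $T$: the proposition does not explicitly restrict $T$ to values in $I$, so I have to commit to the continuous monotone extension of $G$ to $\mathbb{R}$ described in the first step and check that $\{T > F(t)\} = \{G(T) > t\}$ still holds for all $t \in (0,1)$ under this extension. This is what makes the layer-cake identity $Z = G(T)$ available in full, rather than only as a truncation, and lets the final equality $P[A\,|\,\mathcal{H}] = G(T)$ be read literally.
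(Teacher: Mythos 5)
Your proposal is correct and follows exactly the route the paper intends: the paper's proof is the one-line remark that the result ``follows from Theorem~\ref{th:Langford} in the same way as Theorem~\ref{th:devroye}'', i.e.\ feed the family $H(t)=\{T>F(t)\}=\{G(T)>t\}$ into the probing inequality, observe the excess loss vanishes by \eqref{eq:exact}, identify $Z=G(T)$ by the layer-cake formula, and deduce sufficiency via the tower property. Your explicit treatment of the monotone extension of $G$ off the interval $I$ is a genuine (and correctly handled) bookkeeping point that the paper glosses over.
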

\begin{proof} Proposition~\ref{pr:increasing} follows from Theorem~\ref{th:Langford} in 
the same way as Theorem~\ref{th:devroye}. 
\end{proof}

Proposition~\ref{pr:increasing} could serve as justification for the familiar assumption that
a scoring classifier can be mapped by a strictly increasing transformation to the
posterior probability of the positive class \citep{zadrozny2001obtaining, kull2017betacalibration}.
However, this would require to have \eqref{eq:exact} satisfied to a reasonable degree.

A related criterion for sufficiency rather refers to the ranking of the instances provided 
by the scoring classifier than to optimality with respect to the mean cost-sensitive loss.

\begin{proposition}\label{pr:comon}
In the setting of Corollary~\ref{co:admissible}, assume that the mapping $T$ is real-valued.
Assume furthermore that $T$ and the probability $P[A\,|\,\mathcal{H}]$ of $A$ conditional on 
$\mathcal{H}$ are strongly comonotonic, i.e.\ it holds for all 
$\omega_1, \omega_2 \in \Omega$ that
\begin{equation*}
T(\omega_1) < T(\omega_2)  \quad \iff \quad P[A\,|\,\mathcal{H}](\omega_1) < 
 P[A\,|\,\mathcal{H}](\omega_2). 
\end{equation*}
Then $\sigma(T)$ (or just $T$) is sufficient for $\mathcal{H}$ with respect to $A$, i.e.\ it
holds that $P[A\,|\,\mathcal{H}] = P[A\,|\,\sigma(T)]$.
\end{proposition}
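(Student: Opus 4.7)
The plan is to apply the `(ii) $\Rightarrow$ (i)' direction of Corollary~\ref{co:admissible}: it suffices to exhibit a Borel measurable function $G$ with $\Psi = G(T)$, where $\Psi := P[A\,|\,\mathcal{H}]$. Once this factorisation is in hand, $\Psi$ becomes $\sigma(T)$-measurable, and because $\sigma(T)\subset \mathcal{H}$ the defining identity $E[\mathbf{1}_A\,\mathbf{1}_B] = E[\Psi\,\mathbf{1}_B]$ for $B \in \sigma(T)$ certifies that $\Psi$ is also a version of $P[A\,|\,\sigma(T)]$, which is precisely the sufficiency claim.

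First, I would unpack strong comonotonicity. The `$\Leftarrow$' direction rules out $T(\omega_1)=T(\omega_2)$ whenever $\Psi(\omega_1)\neq \Psi(\omega_2)$, so $T(\omega_1)=T(\omega_2)$ forces $\Psi(\omega_1)=\Psi(\omega_2)$. Consequently the pointwise prescription $G\bigl(T(\omega)\bigr):= \Psi(\omega)$ defines an unambiguous function $G$ on the image $R := T(\Omega) \subset \mathbb{R}$, and the `$\Rightarrow$' direction implies that $G$ is strictly increasing on $R$.

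The second step is the only genuinely technical point: I must extend $G$ from $R$ to a Borel measurable function $\tilde{G}:\mathbb{R}\to [0,1]$ without changing its values on $R$. The cleanest device is
\begin{equation*}
\tilde{G}(s) \ =\ \sup\bigl\{G(t) : t \in R,\ t \le s\bigr\},
\end{equation*}
with the convention $\sup\emptyset = 0$. By construction $\tilde{G}$ is non-decreasing on all of $\mathbb{R}$, and monotone real functions are Borel measurable; moreover, since $G$ is strictly increasing on $R$, for any $\omega\in\Omega$ the supremum at $s=T(\omega)$ is attained and equals $G(T(\omega)) = \Psi(\omega)$. Hence $\Psi = \tilde{G}(T)$ everywhere.

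I expect this to be the main (mild) obstacle: one must be careful that the extension $\tilde{G}$ does not accidentally change the values of $G$ on $R$, which is exactly where strict monotonicity of $G$ is used. Once the factorisation $\Psi=\tilde{G}(T)$ is established, Corollary~\ref{co:admissible}(ii) delivers the sufficiency conclusion immediately; alternatively, one can simply invoke the $\sigma(T)$-measurability of $\Psi$ together with the tower property of conditional expectation to identify $\Psi$ with $P[A\,|\,\sigma(T)]$.
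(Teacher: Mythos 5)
Your proof is correct, and it reaches the conclusion by the same overall route as the paper -- establish a factorisation $P[A\,|\,\mathcal{H}] = \varphi(T)$ through a monotone (hence Borel measurable) function and then invoke Corollary~\ref{co:admissible} -- but it differs in how the factorisation is obtained. The paper gets the existence of a strictly increasing $\varphi:\mathbb{R}\to\mathbb{R}$ with $P[A\,|\,\mathcal{H}]=\varphi(T)$ by citing Proposition~4.5 of \cite{denneberg1994non} together with Proposition~2.3~(iii) of \cite{denneberg2006contribution}, i.e.\ it outsources the key step to the comonotonicity literature. You instead construct the function by hand: well-definedness of $G$ on the range $R=T(\Omega)$ from the ``$\Leftarrow$'' half of strong comonotonicity, strict monotonicity on $R$ from the ``$\Rightarrow$'' half, and then the monotone extension $\tilde{G}(s)=\sup\{G(t):t\in R,\ t\le s\}$. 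This is more elementary and self-contained, and it quietly sidesteps a genuine pitfall -- $R$ need not be a Borel set, so one cannot simply define $G$ on $R$ and hope to check measurability there; your $\tilde{G}$ is non-decreasing on all of $\mathbb{R}$ and therefore Borel regardless. The only (harmless) difference in the conclusions is that the cited results deliver a \emph{strictly} increasing extension on all of $\mathbb{R}$, whereas your $\tilde{G}$ is merely non-decreasing off $R$; since Corollary~\ref{co:admissible}~(ii) only requires measurability, nothing is lost. One cosmetic remark: in the verification that the supremum at $s=T(\omega)$ equals $G(T(\omega))$, plain monotonicity of $G$ on $R$ already suffices; strictness is really consumed earlier, in showing that the ``$\Rightarrow$'' direction makes $G$ order-preserving rather than in protecting the extension.
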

\begin{proof} Combining Proposition~4.5 of \cite{denneberg1994non} and
Proposition~2.3~(iii) of \cite{denneberg2006contribution}, we obtain the existence of
a strictly increasing function $\varphi:\mathbb{R}\to\mathbb{R}$ such that 
$P[A\,|\,\mathcal{H}] = \varphi(T)$. As monotonic functions are Borel measurable, 
sufficiency of $T$ follows by Corollary~\ref{co:admissible}. 
\end{proof}

By Proposition~\ref{pr:comon}, `Covariate Shift with Posterior Drift', a special
type of dataset shift introduced by \cite{Scott2019}, can alternatively be defined by
postulating that the posterior class probabilities on the source and target domains are 
strongly comonotonic. As a consequence, the posterior probability of the source domain
is sufficient with respect to the positive class labels also on the target domain. 

At first glance, Proposition~\ref{pr:comon} might appear encouraging with regard to the
chance to identify sufficient random variables $T$ because the comonotonicity criterion
does not look too demanding. For instance, \cite{Zadrozny&Elkan2002}, at the beginning of Section~1, 
expressed confidence that
this is typically achieved when learning classifiers: `Most supervised 
learning methods produce classifiers that output
scores $s(x)$ which can be used to rank the examples in the test set
from the most probable member to the least probable member of
a class $c$. That is, for two examples $x$ and $y$, if $s(x)<s(y)$ then
$P(c|x)<P(c|y)$.' 

However, while this statement is very plausible for
potential comonotonicity of a scoring classifier $T$ and posterior class 
probabilities $P[A\,|\,\sigma(T)]$, it is much less plausible with regard
to probabilities $P[A\,|\,\mathcal{H}]$ conditional on the full available information. 
Moreover, comonotonicity of two real-valued random variables $X_1, X_2$
implies that Kendall's $\tau$ and Spearman's rank correlation take the value one when
being applied to $X_1, X_2$ (see, for instance, \citealp{denuit2003simple}). 
This might not be easy to achieve, given the elusive nature of $P[A\,|\,\mathcal{H}]$.

Note also that under Assumption~\ref{as:setting} 
finding a mapping $T$ that is comonotonic with $P[A\,|\,\mathcal{H}]$
is equivalent to solving the bipartite ranking problem  
(\citealp{clemenccon2009tree}, Section~2.1).

\subsection{Alternative notions of sufficiency}
\label{se:different}

\cite{adragni2009sufficient} introduced in Definition~1.1 sufficiency in a setting
of random variables rather than $\sigma$-fields. Talking about 
`sufficient dimension reduction',
they considered univariate random variables $Y$ and $\mathbb{R}^p$-valued random vectors $X$ 
and defined `A reduction $R: \mathbb{R}^p \to \mathbb{R}^q$, $q \le p$, is sufficient 
if it satisfies one of
the following three statements:
\begin{itemize} 
\item[(i)] inverse reduction, $X\,|\,(Y,R(X))\ \sim\ X\,|\,R(X)$,
\item[(ii)] forward reduction, $Y\,|\,X\ \sim\ Y\,|\,R(X)$,
\item[(iii)] joint reduction $(X\perp~Y) \ |\ R(X)$,
\end{itemize}
where $\perp$ indicates independence, $\sim$ means identically distributed and $A | B$ refers
to the random vector $A$ given the vector $B$.'

A sufficient forward reduction in the sense of \cite{adragni2009sufficient} is a sufficient 
statistic in the sense of \cite{devroye1996probabilistic} and hence a special case of
Definition~\ref{de:sufficiency}. With regard to the relationship between `inverse reduction',
`forward reduction' and 'joint reduction', \cite{adragni2009sufficient} stated: `They are equivalent when $(Y,X)$
has a joint distribution.' In addition, according to Section~1(b) of \cite{adragni2009sufficient},
`If we consider a generic statistical problem and reinterpret $X$ as the total data
$D$ and $Y$ as the parameter $\theta$, then the condition for inverse reduction becomes
$D\,|\,(\theta,R)\ \sim\ D\,|\,R$ so that $R$ is a sufficient statistic. In this way, the definition of a
sufficient reduction encompasses \citeauthor{fisher1922mathematical}'s
(\citeyear{fisher1922mathematical}) classical definition of sufficiency.'
\cite{vanTrees} made on pages~34 and 35 a similar comment in the special context of binary
classification (see also \citealp{DeGroot&Fienberg1983}, Theorem~2).

The following proposition shows that also in the more general measure-theoretic setting of
this paper, there is a meaningful notion of `sufficient inverse reduction' that is equivalent
to Definition~\ref{de:sufficiency} of sufficiency if $\mathcal{G} \subset \mathcal{H}$. 
In Proposition~\ref{pr:forward}, the 
events $M$ of the $\sigma$-field $\mathcal{M}$ play the role the parameters $\theta$ play 
in the classical Fisher concept of sufficiency and the values of the variable $Y$ play in the definition
of inverse reduction of \cite{adragni2009sufficient}.
\begin{proposition}\label{pr:forward}
Let $(\Omega, \mathcal{A}, P)$ be a probability space and assume that 
$\mathcal{G}$, $\mathcal{H}$ and $\mathcal{M}$ are sub-$\sigma$-fields of $\mathcal{A}$.
With the notation $\mathcal{F}_1\vee\mathcal{F}_2 = \sigma(\mathcal{F}_1\cup\mathcal{F}_2)$
for $\sigma$-fields $\mathcal{F}_1$ and $\mathcal{F}_2$, 
then the following three statements are equivalent:
\begin{enumerate} 
\item[(i)] For all $H \in \mathcal{H}$, it holds that
$P[H\,|\,\mathcal{G}\vee\mathcal{M}] = P[H\,|\,\mathcal{G}]$.
\item[(ii)] For all $M\in\mathcal{M}$, it holds that 
$P[M \,|\, \mathcal{G}\vee\mathcal{H}] = P[M \,|\, \mathcal{G}]$.
\item[(iii)] For all $H \in \mathcal{H}$ and all $M\in\mathcal{M}$, it holds that
$$P[H\cap M\,|\,\mathcal{G}] = P[H\,|\,\mathcal{G}]\,P[M\,|\,\mathcal{G}].$$
\end{enumerate}
\end{proposition}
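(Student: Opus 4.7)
The plan is to prove the chain $(\text{i}) \Leftrightarrow (\text{iii}) \Leftrightarrow (\text{ii})$, exploiting the fact that statement~(iii) is manifestly symmetric in $\mathcal{H}$ and $\mathcal{M}$. So once the equivalence $(\text{i}) \Leftrightarrow (\text{iii})$ is established, the equivalence $(\text{ii}) \Leftrightarrow (\text{iii})$ follows by interchanging the roles of $\mathcal{H}$ and $\mathcal{M}$.

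To prove $(\text{i}) \Rightarrow (\text{iii})$, I fix $H\in\mathcal{H}$, $M\in\mathcal{M}$, $G\in\mathcal{G}$ and start with the defining property of the conditional probability given $\mathcal{G}\vee\mathcal{M}$:
\begin{equation*}
P[G\cap M\cap H] \ = \ E\bigl[\mathbf{1}_{G\cap M}\,P[H\,|\,\mathcal{G}\vee\mathcal{M}]\bigr]
\ = \ E\bigl[\mathbf{1}_{G}\,\mathbf{1}_{M}\,P[H\,|\,\mathcal{G}]\bigr],
\end{equation*}
where the last equality uses~(i). Pulling the $\mathcal{G}$-measurable factors out after conditioning on $\mathcal{G}$ and applying the tower property gives
\begin{equation*}
P[G\cap M\cap H] \ = \ E\bigl[\mathbf{1}_{G}\,P[H\,|\,\mathcal{G}]\,P[M\,|\,\mathcal{G}]\bigr].
\end{equation*}
Since this holds for every $G\in\mathcal{G}$ and the integrand on the right is $\mathcal{G}$-measurable, the defining property of conditional expectation yields $P[H\cap M\,|\,\mathcal{G}] = P[H\,|\,\mathcal{G}]\,P[M\,|\,\mathcal{G}]$, i.e.~(iii).

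For the reverse $(\text{iii}) \Rightarrow (\text{i})$, I need to identify $P[H\,|\,\mathcal{G}]$ as a version of $P[H\,|\,\mathcal{G}\vee\mathcal{M}]$. Since $P[H\,|\,\mathcal{G}]$ is trivially $(\mathcal{G}\vee\mathcal{M})$-measurable, it suffices to verify
\begin{equation*}
E[\mathbf{1}_K\,\mathbf{1}_H] \ = \ E\bigl[\mathbf{1}_K\,P[H\,|\,\mathcal{G}]\bigr]
\end{equation*}
for all $K$ in a $\pi$-system generating $\mathcal{G}\vee\mathcal{M}$; the family $\{G\cap M : G\in\mathcal{G},\,M\in\mathcal{M}\}$ is such a $\pi$-system. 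For $K=G\cap M$, (iii) followed by the tower property shows both sides equal $E[\mathbf{1}_G\,P[H\,|\,\mathcal{G}]\,P[M\,|\,\mathcal{G}]]$. A Dynkin $\pi$--$\lambda$ argument then extends the identity to all of $\mathcal{G}\vee\mathcal{M}$, yielding~(i).

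The only delicate point is this last extension step: one must check that the class of $K\in\mathcal{G}\vee\mathcal{M}$ for which the identity holds is a Dynkin system (or apply the monotone class theorem for functions), so that agreement on the generating $\pi$-system propagates to the whole $\sigma$-field. Everything else reduces to mechanical use of the tower property, the $\mathcal{G}$-measurability of $P[H\,|\,\mathcal{G}]$ and $P[M\,|\,\mathcal{G}]$, and the defining property of conditional expectation; no assumption on nesting of the three $\sigma$-fields is needed, which is why this reformulation works in the generality stated.
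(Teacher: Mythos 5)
Your proof is correct. The paper does not actually spell out a proof of this proposition -- it merely remarks that the argument is ``straightforward, by making use of the properties of conditional probabilities'' -- and your argument is precisely the standard one it has in mind: the computation for $(\text{i})\Rightarrow(\text{iii})$ via the tower property, the identification of $P[H\,|\,\mathcal{G}]$ as a version of $P[H\,|\,\mathcal{G}\vee\mathcal{M}]$ by checking the defining integral identity on the $\pi$-system $\{G\cap M\}$ (the extension step is sound because both sides define finite measures on $\mathcal{G}\vee\mathcal{M}$ agreeing on a generating $\pi$-system containing $\Omega$), and the observation that the symmetry of (iii) in $\mathcal{H}$ and $\mathcal{M}$ delivers $(\text{ii})\Leftrightarrow(\text{iii})$ for free.
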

The proof of Proposition~\ref{pr:forward} is straightforward, by making use of the properties of
conditional probabilities.
 To reconcile Proposition~\ref{pr:forward} with the notion of sufficient dimension
reduction used by \cite{adragni2009sufficient}, choose
\begin{equation}
\mathcal{M} = \sigma(Y), \qquad \mathcal{H} = \sigma(X), \qquad
\mathcal{G} = \sigma(R(X)) \subset \mathcal{H}.
\end{equation}
Then Proposition~\ref{pr:forward}~(i) describes sufficient inverse reduction, (ii)
generalises sufficient forward reduction by not requiring $\mathcal{G}\subset \mathcal{H}$,
and (iii) is another way to express sufficient joint reduction.

For a better understanding of Proposition~\ref{pr:forward}, it is helpful to
think of $\mathcal{M}$ as a collection of events to be predicted and of $\mathcal{H}$
as the maximal collection of observable events the prediction can be based on. Regarding
$\mathcal{G}$, the following three cases may be considered separately:
\begin{description} 
 \item[Case 1:] $\mathcal{G}\subset \mathcal{H}$. Then
Proposition~\ref{pr:forward}~(ii) is equivalent to Definition~\ref{de:sufficiency},
and we are in the situation of Section~\ref{se:admissible} above 
where aggregation of observations without loss 
of prediction quality is sought 
in order to facilitate computations.
\item[Case 2:] $\mathcal{G}\supset \mathcal{H}$. Then Proposition~\ref{pr:forward} does not
provide much value because all three statements are obviously true as $\mathcal{H}$ basically
becomes redundant in the statements of the proposition. However, due to our interpretation
of $\mathcal{H}$ as maximal collection of observable events, in this case $\mathcal{G}$ may contain
unobservable events. This would make it unfit to substitute for $\mathcal{H}$.
But instead $\mathcal{G}$ could replace $\mathcal{M}$ as regression target, thanks 
to the tower property:
\begin{equation}\label{eq:broecker}
P[M\,|\,\mathcal{H}] \ = \ E\bigl[P[M\,|\,\mathcal{G}]\bigm|\mathcal{H}\bigr].
\end{equation}
So-called shadow ratings are an example for this approach 
\citep{erlenmaier2011shadow}. Another example is the situation where weather forecasters
$\mathcal{G}$  and $\mathcal{H}$ have access to a common set of observation stations but 
$\mathcal{G}$  gets extra data from some additional stations not in the common set.
\item[Case 3:] $\mathcal{G}\not\subset \mathcal{H}$ and $\mathcal{H}\not\subset \mathcal{G}$.
Proposition~\ref{pr:forward} is non-trivial in this case. It is similar to case~2 in so far 
as again $\mathcal{G}$ may contain unobservable events. However, \eqref{eq:broecker} is
not automatically true in this situation either but is implied by any of the equivalent 
statements of Proposition~\ref{pr:forward}.
 \end{description} 
 
An example for case~3 could be a situation with two weather forecasters. They might
rely on the same observation stations but deploy different meter-readers who make
independent reading errors. Formally, this situation could be described as in
the following example.

\begin{example}\emph{%
Under Assumption~\ref{as:setting}, let $\mathcal{G}$ be a sub-$\sigma$-field of $\mathcal{H}$
that is sufficient for $\mathcal{H}$ with respect to $A$ in the sense of Definition~\ref{de:sufficiency}.
Assume that $\mathcal{I}$ and $\mathcal{J}$ are further sub-$\sigma$-fields of $\mathcal{A}$ such
that $\mathcal{I}\vee\mathcal{J}$ and $\mathcal{H}\vee \{\emptyset, \Omega, A, A^c\}$ are 
independent. Let $\mathcal{G}^\ast = \mathcal{G} \vee \mathcal{I}$ and
$\mathcal{H}^\ast = \mathcal{H} \vee \mathcal{J}$. Then it follows that
$$P[A\,|\,\mathcal{H}^\ast \vee \mathcal{G}^\ast] \ = \
P[A\,|\,\mathcal{G}^\ast],$$
such that for $\mathcal{G}^\ast$, $\mathcal{H}^\ast$ and $\mathcal{M} = \{\emptyset, \Omega, A, A^c\}$
statement~(ii) of Proposition~\ref{pr:forward} is true.
}
\hfill \qed
\end{example}

Is the condition $\mathcal{G} \subset \mathcal{H}$ actually not needed in Definition~\ref{de:sufficiency}? 
Would Theorem~\ref{th:devroye}
still be true if Definition~\ref{de:sufficiency} were replaced by statement~(ii) of 
Proposition~\ref{pr:forward} without requiring $\mathcal{G}\subset \mathcal{H}$?

Indeed,  \cite{schervish1989general} (in Definition~10.1) and \cite{brocker2009reliability}
(in Eq.~(14))
defined sufficiency by means of \eqref{eq:broecker}, without requiring 
$\mathcal{G}\subset \mathcal{H}$ or $\mathcal{H}\subset \mathcal{G}$.
\cite{brocker2009reliability} provided in Section~4 another non-trivial example 
for the third case we described above.

Theorem~10.2 of \cite{schervish1989general} seems to suggest that Theorem~\ref{th:devroye}
is true under \eqref{eq:broecker} instead of sufficiency according to Definition~\ref{de:sufficiency}. 
Below we show that Theorem~10.2 of \cite{schervish1989general} needs to be carefully interpreted.
\begin{itemize} 
\item In Proposition~\ref{pr:broecker}, we prove that indeed \eqref{eq:broecker} 
implies $L^\ast_{\mathcal{G}}(t) \le L^\ast_{\mathcal{H}}(t)$ for the cost-weighted Bayes losses
associated with $\mathcal{G}$ and $\mathcal{H}$ respectively in the sense of Definition~\ref{de:Bayes}.
\item However, in Example~\ref{ex:counter} we show that 
$L^\ast_{\mathcal{G}}(t) \le L^\ast_{\mathcal{H}}(t)$ for all $t \in (0,1)$ does not always
imply \eqref{eq:broecker} (and therefore neither of
the three statements of Proposition~\ref{pr:forward}) if $\mathcal{G} \subset \mathcal{H}$ does
not hold.
\end{itemize}
 
\begin{proposition}\label{pr:broecker}
Under Assumption~\ref{as:setting}, let $\mathcal{G}$ be another sub-$\sigma$-field of $\mathcal{A}$
and assume that \eqref{eq:broecker} is true with $M = A$. Then for all
$t \in (0,1)$ it holds that
\begin{equation}\label{eq:schervish}
L^\ast_{\mathcal{G}}(t) \le L^\ast_{\mathcal{H}}(t).
\end{equation}
\end{proposition}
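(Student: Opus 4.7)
The plan is to represent the cost-weighted Bayes loss in a form that makes the problem amenable to Jensen's inequality applied with the hypothesis \eqref{eq:broecker}.

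First I would rewrite $L(H,t)$ for $H\in\mathcal{H}$ in terms of the posterior $\Psi_\mathcal{H}$. Starting from \eqref{eq:B} and conditioning the indicators of $A$ and $A^c$ on $\mathcal{H}$, one obtains
\begin{equation*}
L(H,t) \ = \ E\bigl[(1-t)\,\mathbf{1}_{H^c}\,\Psi_\mathcal{H} + t\,\mathbf{1}_H\,(1-\Psi_\mathcal{H})\bigr].
\end{equation*}
For fixed $\omega$, the integrand is minimised over the binary choice $\mathbf{1}_H(\omega)\in\{0,1\}$ by picking $\mathbf{1}_H(\omega)=1$ exactly when $\Psi_\mathcal{H}(\omega)>t$, which is consistent with \eqref{eq:Bstar} and gives the clean representation
\begin{equation*}
L^\ast_{\mathcal{H}}(t) \ = \ E\bigl[\phi_t(\Psi_\mathcal{H})\bigr], \qquad
\phi_t(x) \ = \ \min\bigl((1-t)\,x,\ t\,(1-x)\bigr).
\end{equation*}
The same formula holds with $\mathcal{G}$ in place of $\mathcal{H}$.

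The key observation is that $\phi_t$, being the pointwise minimum of two affine functions on $[0,1]$, is concave. By hypothesis \eqref{eq:broecker} applied to $M=A$, we have $\Psi_\mathcal{H}=E[\Psi_\mathcal{G}\,|\,\mathcal{H}]$, so the conditional Jensen inequality yields
\begin{equation*}
\phi_t(\Psi_\mathcal{H}) \ = \ \phi_t\bigl(E[\Psi_\mathcal{G}\,|\,\mathcal{H}]\bigr) \ \ge \ E\bigl[\phi_t(\Psi_\mathcal{G})\,\bigm|\,\mathcal{H}\bigr].
\end{equation*}
Taking expectations on both sides and applying the tower property then gives $L^\ast_{\mathcal{H}}(t)\ge E[\phi_t(\Psi_\mathcal{G})] = L^\ast_{\mathcal{G}}(t)$, which is the claim.

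I do not expect any serious obstacle. The only minor care required is in the first step: $\Psi_\mathcal{H}$ is only defined almost surely, so the identification of the optimiser with $\{\Psi_\mathcal{H}>t\}$ and the representation of $L^\ast_\mathcal{H}(t)$ as $E[\phi_t(\Psi_\mathcal{H})]$ should be understood up to $P$-null sets, but this does not affect the expectations. Note also that $\mathcal{G}\subset\mathcal{H}$ is not needed anywhere in this argument, which is why \eqref{eq:broecker} alone already forces the ordering \eqref{eq:schervish} even without nesting.
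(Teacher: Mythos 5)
Your proof is correct, but it takes a genuinely different route from the paper's. The paper fixes $H\in\mathcal{H}$ and, using \eqref{eq:broecker} together with the tower property, rewrites $L(H,t)$ as the cost-weighted loss of the \emph{randomised} $\mathcal{G}$-measurable classifier $P[H\,|\,\mathcal{G}]$, i.e.\ $L(H,t)=(1-t)\,E\bigl[\mathbf{1}_A\,(1-P[H\,|\,\mathcal{G}])\bigr]+t\,E\bigl[\mathbf{1}_{A^c}\,P[H\,|\,\mathcal{G}]\bigr]$; it then invokes an extension of Theorem~32.4 of \cite{devroye1996probabilistic} to randomised decision classifiers to conclude $L(H,t)\ge L^\ast_{\mathcal{G}}(t)$. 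You instead use the representation $L^\ast_{\mathcal{H}}(t)=E[\phi_t(\Psi_{\mathcal{H}})]$ with $\phi_t(x)=\min\bigl((1-t)x,\,t(1-x)\bigr)$ concave (the same identity as \eqref{eq:written} in the proof of Proposition~\ref{pr:properties}) and apply the conditional Jensen inequality to $\Psi_{\mathcal{H}}=E[\Psi_{\mathcal{G}}\,|\,\mathcal{H}]$. Both arguments are sound; the direction of Jensen for concave $\phi_t$ is the right one, the identification of the optimiser in \eqref{eq:Bstar} up to null sets is harmless as you note, and nesting is indeed not used in either proof. Your route is shorter and self-contained (no appeal to the randomised-classifier version of the Bayes-optimality theorem, which the paper has to justify separately via \cite{tasche2018plug}), and because it only uses concavity of $\phi_t$ it generalises verbatim to any concave generalised entropy in place of $\phi_t$ --- which is precisely the extension to general proper scores of \cite{brocker2009reliability} that the paper mentions immediately after Proposition~\ref{pr:broecker}. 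What the paper's route buys in exchange is the slightly more explicit intermediate fact that \emph{every} $H\in\mathcal{H}$ is matched in loss by a concrete randomised $\mathcal{G}$-classifier, which ties the result to the admissibility framework of Section~32.3 of \cite{devroye1996probabilistic}.
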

\begin{proof} Fix any $H \in \mathcal{H}$ and $0 < t < 1$. Then we obtain
\begin{align}
L(H, t) & \ = \ (1-t)\,P[A\cap H^c] + t\,P[A^c \cap H] \notag\\
    & \ = \ \ (1-t)\,E\bigl[\mathbf{1}_{H^c}\,P[A\,|\,\mathcal{H}]\bigr]  + 
        t\,E\bigl[\mathbf{1}_{H}\,(1-P[A\,|\,\mathcal{H}])\bigr]\notag\\
   & \ = \ (1-t)\,E\bigl[\mathbf{1}_{H^c}\,E\bigl[P[A\,|\,\mathcal{G}]\bigm|\mathcal{H}\bigr]\bigr]  + 
        t\,E\bigl[\mathbf{1}_{H}\,\bigl(1-E\bigl[P[A\,|\,\mathcal{G}]\bigm|\mathcal{H}\bigr]\bigr)\bigr] \notag\\
   & \  = \ 
   (1-t)\,E\bigl[(1-P[H\,|\,\mathcal{G}])\,P[A\,|\,\mathcal{G}]\bigr]  + 
        t\,E\bigl[P[H\,|\,\mathcal{G}]\,(1-P[A\,|\,\mathcal{G}])\bigr]\notag\\
   & \ = \ (1-t)\,E\bigl[\mathbf{1}_A\,(1-P[H\,|\,\mathcal{G}])\bigr]  + 
        t\,E\bigl[\mathbf{1}_{A^c}\,P[H\,|\,\mathcal{G}]\bigr]. \label{eq:imply}     
\end{align}
Closer inspection of Theorem~32.4 of \cite{devroye1996probabilistic} shows that
it does not only apply to `decision functions' but also to
`randomised decision classifiers' in the sense of
\cite{tasche2018plug}.
Therefore, \eqref{eq:imply}  implies 
$L(H, t) \ge L^\ast_{\mathcal{G}}(t)$ and hence also \eqref{eq:schervish}.
\end{proof}

Proposition~\ref{pr:broecker} in conjunction with \eqref{eq:area*} from 
Section~\ref{se:prelim} below implies
that under \eqref{eq:broecker} the refinement loss (in terms of the Brier score) 
of $\mathcal{G}$ is not greater than the refinement loss of 
$\mathcal{H}$ -- which was proven by \cite{brocker2009reliability} for resolution and refinement
defined in terms of general proper scores.

In order to show that \eqref{eq:schervish} for all $t \in (0,1)$ does in general not imply 
\eqref{eq:broecker}, we  make use
of the following lemma.

\begin{lemma}\label{le:counter}
Under Assumption~\ref{as:setting}, let $\mathcal{G}$ be another sub-$\sigma$-field of $\mathcal{A}$
such that $\mathcal{H}$ and $\mathcal{G}$ are independent conditional on $A$ and independent conditional
on $A^c$. Assume $L^\ast_{\mathcal{G}}(t) > 0$ for some $t\in (0,1)$. 
Then \eqref{eq:broecker} (with $M = A$) implies that $A$ and $\mathcal{H}$ are independent
(hence $P[A\,|\,\mathcal{H}] = P[A]$ is constant).
\end{lemma}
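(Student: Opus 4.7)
\textbf{Proof plan for Lemma~\ref{le:counter}.}
The plan is to translate \eqref{eq:broecker} into a single scalar identity that involves only the conditional laws of $\mathcal{H}$ given $A$ and given $A^c$, and then to leverage the cost-loss hypothesis to rule out a degenerate case.

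First I would rewrite the condition in its integrated form: \eqref{eq:broecker} with $M=A$ says that for every bounded $\mathcal{H}$-measurable $g$,
\begin{equation*}
E[g \, \Psi_{\mathcal{G}}] \ = \ E\bigl[g\,E[\Psi_{\mathcal{G}}\,|\,\mathcal{H}]\bigr] \ = \ E\bigl[g\,P[A\,|\,\mathcal{H}]\bigr] \ = \ E[g\,\mathbf{1}_A] \ = \ P[A]\,E[g\,|\,A].
\end{equation*}
On the other hand, by decomposing expectations through $A$ and $A^c$ and using the conditional independence of $\mathcal{H}$ and $\mathcal{G}$ given each of these two events, I get
\begin{equation*}
E[g \, \Psi_{\mathcal{G}}] \ = \ P[A]\,E[\Psi_{\mathcal{G}}\,|\,A]\,E[g\,|\,A] + P[A^c]\,E[\Psi_{\mathcal{G}}\,|\,A^c]\,E[g\,|\,A^c].
\end{equation*}
A short computation using $\Psi_{\mathcal{G}}=P[A\,|\,\mathcal{G}]$ identifies $P[A]\,E[\Psi_{\mathcal{G}}\,|\,A] = E[\mathbf{1}_A\,\Psi_{\mathcal{G}}] = E[\Psi_{\mathcal{G}}^2]$ and $P[A^c]\,E[\Psi_{\mathcal{G}}\,|\,A^c] = E[\Psi_{\mathcal{G}}(1-\Psi_{\mathcal{G}})]$; call the latter quantity $b$.

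Equating the two expressions for $E[g\,\Psi_{\mathcal{G}}]$ and simplifying (using $E[\Psi_{\mathcal{G}}^2] = P[A] - b$) yields the key identity
\begin{equation*}
b\,\bigl(E[g\,|\,A] - E[g\,|\,A^c]\bigr) \ = \ 0 \qquad \text{for every bounded $\mathcal{H}$-measurable $g$.}
\end{equation*}
Therefore, if $b > 0$, taking $g = \mathbf{1}_H$ with $H \in \mathcal{H}$ arbitrary gives $P[H\,|\,A] = P[H\,|\,A^c]$, which is exactly independence of $A$ and $\mathcal{H}$, and then $P[A\,|\,\mathcal{H}] = P[A]$ is constant.

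It remains to rule out $b = 0$, and this is where the hypothesis $L^\ast_{\mathcal{G}}(t_0) > 0$ for some $t_0 \in (0,1)$ enters. The main (very mild) obstacle is the following contrapositive: if $b = E[\Psi_{\mathcal{G}}(1-\Psi_{\mathcal{G}})] = 0$, then $\Psi_{\mathcal{G}} \in \{0,1\}$ almost surely. Setting $G_\ast = \{\Psi_{\mathcal{G}} = 1\} \in \mathcal{G}$, one checks from $\Psi_{\mathcal{G}} = P[A\,|\,\mathcal{G}]$ that $P[A \cap G_\ast^c] = E[\mathbf{1}_{G_\ast^c}\,\Psi_{\mathcal{G}}] = 0$ and $P[A^c \cap G_\ast] = E[\mathbf{1}_{G_\ast}(1-\Psi_{\mathcal{G}})] = 0$, whence $L(G_\ast, t) = 0$ for every $t \in (0,1)$, so $L^\ast_{\mathcal{G}}(t) = 0$ on the whole interval. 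This contradicts the assumption, so $b > 0$, and the conclusion follows from the previous step.
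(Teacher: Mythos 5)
Your proposal is correct and follows essentially the same route as the paper's proof: both evaluate $E[\mathbf{1}_H\,\Psi_{\mathcal{G}}]$ (your general $g$ versus the paper's $\mathbf{1}_H$) in two ways, once via \eqref{eq:broecker} and once via the conditional independence, and both use the hypothesis $L^\ast_{\mathcal{G}}(t)>0$ to exclude the degenerate case $\Psi_{\mathcal{G}}\in\{0,1\}$ a.s. Your packaging of the common factor as $b=E[\Psi_{\mathcal{G}}(1-\Psi_{\mathcal{G}})]$ is a clean algebraic reorganisation of the paper's equations \eqref{eq:counterH} and \eqref{eq:counter}, not a different argument.
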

\begin{proof} For any $H\in \mathcal{H}$ we obtain
\begin{align*}
P[A \cap H] & \ = \ E\bigl[\mathbf{1}_H\,P[A\,|\,\mathcal{H}]\bigr]\\
    & \ = \ E\bigl[\mathbf{1}_H\,E\bigl[P[A\,|\,\mathcal{G}]\,\big|\,\mathcal{H}\bigr]\bigr]\\
    & \ = \ E\bigl[\mathbf{1}_H\,P[A\,|\,\mathcal{G}]\bigr]\\
    & \ = \ P[A]\,E\bigl[\mathbf{1}_H\,P[A\,|\,\mathcal{G}]\,\big|\,A\bigr] 
   +  P[A^c]\,E\bigl[\mathbf{1}_H\,P[A\,|\,\mathcal{G}]\,\big|\,A^c\bigr]\\
    & \ = \ P[A]\,P[H\,|\,A]\,E\bigl[P[A\,|\,\mathcal{G}]\,\big|\,A\bigr]
   +  P[A^c]\,P[H\,|\,A^c]\,E\bigl[P[A\,|\,\mathcal{G}]\,\big|\,A^c\bigr].
\end{align*}
This implies for all $H \in \mathcal{H}$
\begin{subequations}
\begin{equation}\label{eq:counterH}
P[A]\,P[H\,|\,A]\,\Big(1-E\bigl[P[A\,|\,\mathcal{G}]\,\big|\,A\bigr]\Big)
   \ = \  P[A^c]\,P[H\,|\,A^c]\,E\bigl[P[A\,|\,\mathcal{G}]\,\big|\,A^c\bigr].
\end{equation}
Choosing $H = \Omega$ gives
\begin{equation}\label{eq:counter}
P[A]\,\Big(1-E\bigl[P[A\,|\,\mathcal{G}]\,\big|\,A\bigr]\Big)
   \ = \  P[A^c]\,E\bigl[P[A\,|\,\mathcal{G}]\,\big|\,A^c\bigr].
\end{equation}
\end{subequations}
$E\bigl[P[A\,|\,\mathcal{G}]\,\big|\,A^c\bigr] = 0$ would imply 
$E\bigl[P[A\,|\,\mathcal{G}]\,\big|\,A\bigr] = 1$ (also vice versa) and hence
$P[A\,|\,\mathcal{G}] = 0$  on $A^c$ and $P[A\,|\,\mathcal{G}] = 1$ on $A$, almost surely.
From this it would follow that
$$L^\ast_{\mathcal{G}}(t) \ = \
(1-t)\,P\bigl[A \cap\{P[A\,|\,\mathcal{G}]\le t\}\bigr] + 
t\,P\bigl[A^c \cap\{P[A\,|\,\mathcal{G}]> t\}\bigr] \ = \ 0,$$
contradicting the assumption $L^\ast_{\mathcal{G}}(t) > 0$.
Hence \eqref{eq:counterH} and \eqref{eq:counter} imply
$$P[H\,|\,A] \ = \ P[H\,|\,A^c]\qquad \text{for all}\ H \in \mathcal{H}.$$
In plain language, this means that $A$ and $\mathcal{H}$ are independent. 
\end{proof}
\begin{example}\label{ex:counter}\emph{%
Figure~\ref{fig:Brier2} shows curves $t \mapsto L^\ast_{\mathcal{H}}(t)$ and  
$t \mapsto L^\ast_{\mathcal{G}}(t)$ for $0 \le t \le 1$ for different choices of
$\mathcal{H}$ and $\mathcal{G}$, with the cost-weighted Bayes loss $L^\ast_{\mathcal{H}}(t)$
defined by \eqref{eq:Bstar}.
\begin{itemize} 
\item For Figure~\ref{fig:Brier2}, under Assumption~\ref{as:setting} a model based on bivariate 
normal feature distributions with different mean
vectors but identical covariance matrices has been chosen. 
The correlation between
the components is assumed to be zero.
Therefore the two components $X_1$ and $X_2$ of the normal distributions are independent 
conditional on the classes (i.e.\ given $A$ and $A^c$ respectively in the setting of
Assumption~\ref{as:setting}). We define $\mathcal{H} = \sigma(X_1)$ and $\mathcal{G} = \sigma(X_2)$
such that $\mathcal{G}\not\subset \mathcal{H}$ and $\mathcal{H}\not\subset \mathcal{G}$.
\item The dashed curve of Figure~\ref{fig:Brier2} shows the curve 
$t \mapsto L^\ast_{\mathcal{H}\vee\mathcal{G}}(t)$ of the 
true posterior class probability, making use of
the full information available (i.e.\ both marginal components). Note that the maximum of this
optimal curve is lower than the maximum of the optimal curve in Figure~\ref{fig:Brier} below. This
is due to the fact that for this example zero correlation between the components of class conditional
distribution is assumed.
\item The dotted curve shows $t \mapsto L^\ast_{\mathcal{H}}(t)$ (i.e.\ the curve for $X_1$, 
hence based on partial information only).
\item The dash-dotted curve shows $t \mapsto L^\ast_{\mathcal{G}}(t)$ (i.e.\ the curve for $X_2$, 
also based on partial information only).
\item The solid curve shows $t \mapsto L^\ast_{\{\emptyset, \Omega\}}(t) = 
\min\bigl((1-t)\,P[A], t\,(1-P[A])\bigr)$, the curve for the
constant `prior estimate' $P[A]$. 
\end{itemize}
Clearly, the curve for $X_2$ is dominated by the curve for $X_1$, i.e.\ \eqref{eq:schervish} 
is true for all $t \in (0,1)$. However, if $\mathcal{G} = \sigma(X_2)$ and 
$\mathcal{H} = \sigma(X_1)$ satisfied \eqref{eq:broecker}, by Lemma~\ref{le:counter} $X_1$ and $A$ would
be independent. Hence we would have $P[A\,|\,\sigma(X_1)] = P[A]$ constant and the curve for $X_1$
would be the solid curve. This is clearly not the case since the solid and the dotted curves do
not coincide. It follows that \eqref{eq:broecker} cannot be true for $\mathcal{G} = \sigma(X_2)$ and 
$\mathcal{H} = \sigma(X_1)$.
}
 \hfill \qed
\end{example}
\begin{figure}
\begin{center}
\includegraphics[width=10cm]{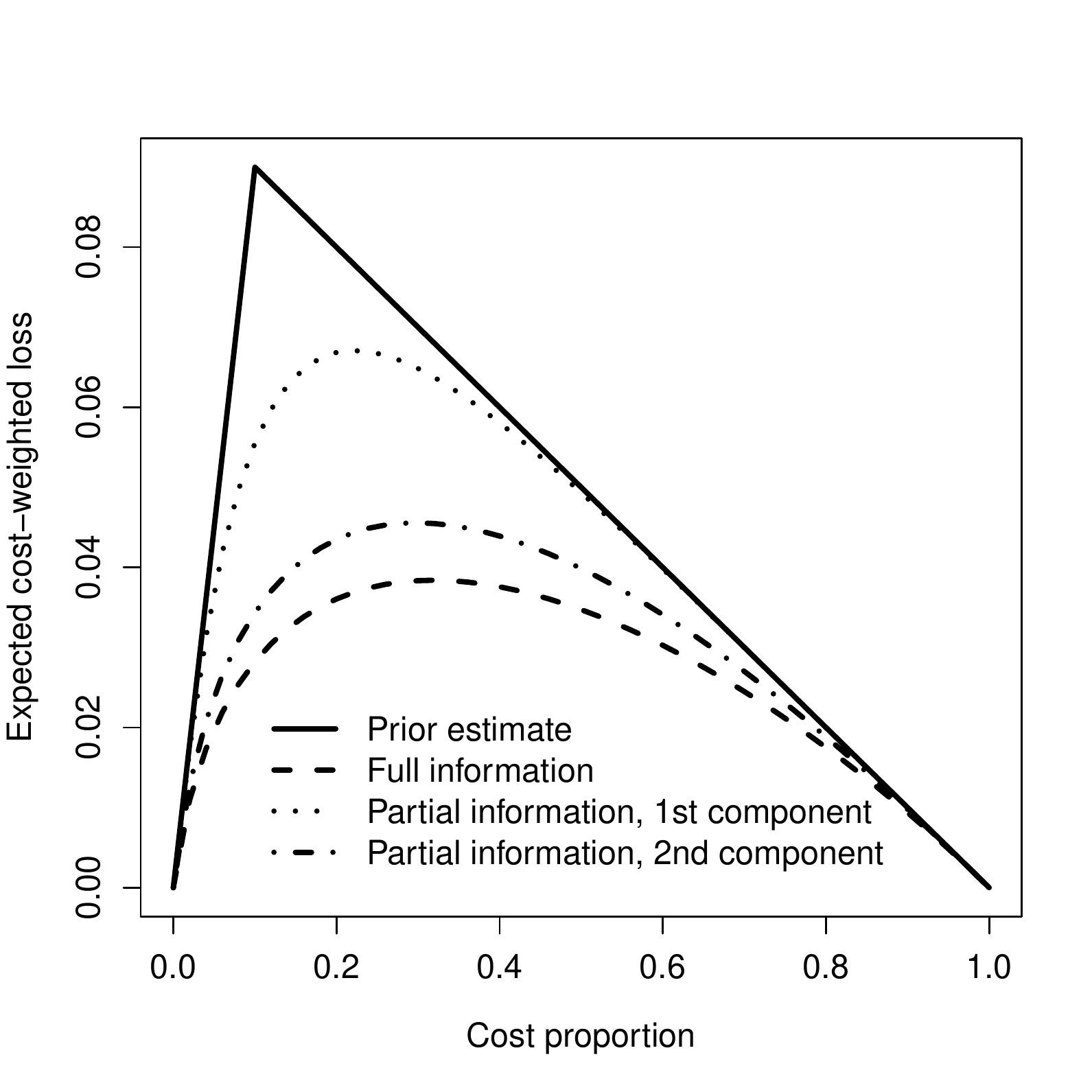}
\caption{{\small{}Illustration of Example~\ref{ex:counter}. The conditional feature distributions are
uncorrelated bivariate normal with identical covariance matrices. The prior probability of the positive class is
10\%.}\label{fig:Brier2}}
\end{center}
\end{figure}
How to reconcile Example~\ref{ex:counter} with Theorem~10.2 of \cite{schervish1989general}?
Unfortunately, \cite{schervish1989general} did not provide a detailed proof of the theorem but
only suggested the building stones needed for a proof. Therefore it seems possible that
in some of the steps of the proof an assumption would have to be made that forecaster A had no more 
information at her disposal than forecaster B (in terms of this paper: $\mathcal{G}
\subset \mathcal{H}$). This way, Theorem~10.2 of \cite{schervish1989general} would contradict
neither Example~\ref{ex:counter} nor Theorem~\ref{th:devroye}.

As observed in \cite[p.~974]{Krueger&Ziegel2021}, according to Strassen's theorem \citep{Strassen1965},
inequality \eqref{eq:schervish} holds if and only if there are random variables $Z_{\mathcal{G}}$ and
$Z_{\mathcal{H}}$ on a possibly different probability space such that 
\begin{itemize}
\item the distributions of $Z_{\mathcal{G}}$ and $P[A\,|\,\mathcal{G}]$ are equal,
\item the distributions of $Z_{\mathcal{H}}$ and $P[A\,|\,\mathcal{H}]$ are equal, and
\item it holds that $E\bigl[Z_{\mathcal{G}}\,|\,\sigma(Z_{\mathcal{H}})\bigr] =
    Z_{\mathcal{H}}$.
\end{itemize}
This observation does not contradict Example~\ref{ex:counter} because there we have shown 
that \eqref{eq:broecker} does not hold almost surely (i.e.\ with probability 1). This does not
exclude the possibility that equality holds in distribution.

\section{Probing reduction}
\label{se:probing}

What if we do not find a sufficient scoring classifier $T$ for $\mathcal{H}$, for instance when
we cannot be sure to have reached the minimum on the left-hand side of \eqref{eq:admissible}, 
or when there are doubts that $T$ is comonotonic with $P[A\,|\,\mathcal{H}]$?
By 
Corollary~\ref{co:admissible}, without sufficiency of $T$ the relation 
$P[A\,|\,\mathcal{H}] = G(T)$ cannot be achieved. Is then everything lost if
no sufficient scoring classifier can be identified?

The criteria for sufficiency we have presented in Section~\ref{se:admissible} have
in common that they involve finding a single variable $T$ which solves an
infinite number of optimisation problems at the same time, see for instance 
Proposition~\ref{pr:increasing}.
This might be a difficult task.
It would be nice to be able to separately solve the optimisation problems, resulting
in  a family of optimal classifiers, and then to combine the classifiers to 
one sufficient scoring classifier $T$ which provides the required simultaneous solution
for all the optimisation problems.

Without referring to the notions of sufficiency or bipartite ranking, 
\cite{langford2005estimating} presented `probing reduction' as a way to
achieve this feat. They described their method with the statement
`reduce learning an estimator of class probability membership [sic] to learning
binary classifiers'. As part of their theoretical analysis, \cite{langford2005estimating}
proved a theorem on `probing error transformation' which implies 
the results of \cite{devroye1996probabilistic}, Section~32.3, on sufficiency and
admissibility and provides a constructive way to estimate posterior probabilities
exploiting all the information available.

In the following, we present a slightly more abstract version of the `probing predictor' and 
Theorem~2 of \cite{langford2005estimating}.

\subsection{Preliminaries}
\label{se:prelim}
Under Assumption~\ref{as:setting}, let $Z$ be a real-valued $\mathcal{A}$-measurable random
variable. For $t\in[0,1]$ and $\omega\in\Omega$ let
\begin{subequations}
\begin{equation}\label{eq:basic}
\ell_Z(\omega, t) \ = \ (1-t)\,\mathbf{1}_A(\omega)\,\mathbf{1}_{[Z(\omega),1]}(t)
  + t\,\mathbf{1}_{A^c}(\omega)\,\mathbf{1}_{[0,Z(\omega))}(t).
\end{equation}
For fixed $\omega\in\Omega$, then $t \mapsto \ell_Z(\omega, t)$ is right-continuous on
$[0,1)$ and has left limits for $t\in (0,1]$. 
As a consequence $\ell_Z(\omega, t)$ is jointly measurable in $(\omega,t)$, see 
any standard textbook on the theory of stochastic processes. Moreover, with $L(H,t)$
and $L_{\mathcal{H}}^\ast(t)$ defined as in Definition~\ref{de:Bayes}, it holds that
\begin{equation}\label{eq:ell}
\begin{split}
L(\{Z>t\}, t) & \ = \ E[\ell_Z(\cdot, t)] \qquad\text{and}\\
L_{\mathcal{H}}^\ast(t) & \ = \ E[\ell_\Psi(\cdot, t)],
\end{split}
\end{equation}
\end{subequations}
for $\Psi = P[A\,|\,\mathcal{H}]$. The double integrals appearing in the following
proposition therefore are well-defined.
The result was shown by \cite{hernandez2012unified}, Theorem~14 and Theorem~29. 
It had been proven before
by \cite{langford2005estimating} as part of their proof of Theorem~2. Thus the result is needed
for the proof of Theorem~\ref{th:Langford} below but is also of interest of its own.

\begin{proposition}\label{pr:area}
Under Assumption~\ref{as:setting}, with the cost-weighted mean loss defined as
in \eqref{eq:B} and the Brier score $BS$ as in Definition~\ref{de:brier}, 
for any $\mathcal{A}$-measurable random variable $Z$ with values in $[0,1]$ it holds that
\begin{subequations}
\begin{equation}\label{eq:area}
2 \int_0^1 L(\{Z > t\},\,t)\,dt \ = \ BS(Z).
\end{equation}
With the cost-weighted Bayes loss defined as in \eqref{eq:Bstar} and $\Psi = P[A\,|\,\mathcal{H}]$, 
the refinement loss $E[\Psi\,(1-\Psi)]$ (see Proposition~\ref{pr:brier})
can be represented as 
\begin{equation}\label{eq:area*}
2 \int_0^1 L_{\mathcal{H}}^\ast(t)\,dt \ = \ E\bigl[\Psi\,(1-\Psi)\bigr].
\end{equation}
\end{subequations}
\end{proposition}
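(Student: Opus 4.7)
The plan is to reduce both identities to a Fubini-style rearrangement of a double integral, reading the Brier score as an ``area under the cost curve''. For the first identity I would unpack the definition of the cost-weighted mean loss via indicator functions, writing
\[
L(\{Z>t\},t) \;=\; (1-t)\,E\bigl[\mathbf{1}_A\,\mathbf{1}_{\{Z\le t\}}\bigr] + t\,E\bigl[\mathbf{1}_{A^c}\,\mathbf{1}_{\{Z>t\}}\bigr],
\]
which is just \eqref{eq:ell} expressed in expectation form. Thanks to the joint measurability of $\ell_Z(\omega,t)$ recorded just above the proposition and the boundedness of the integrand on $\Omega\times[0,1]$, Fubini's theorem applies and I can swap the order of integration.

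The second step is to evaluate the inner $t$-integrals pathwise. For each fixed $\omega$,
\[
2\int_0^1 (1-t)\,\mathbf{1}_{\{Z(\omega)\le t\}}\,dt \;=\; 2\int_{Z(\omega)}^1 (1-t)\,dt \;=\; \bigl(1-Z(\omega)\bigr)^2,
\]
and analogously $2\int_0^Z t\,dt = Z(\omega)^2$. Plugging these back in yields
\[
2\int_0^1 L(\{Z>t\},t)\,dt \;=\; E\bigl[\mathbf{1}_A\,(1-Z)^2\bigr] + E\bigl[\mathbf{1}_{A^c}\,Z^2\bigr].
\]
Since $\mathbf{1}_A+\mathbf{1}_{A^c}=1$ and $(\mathbf{1}_A-Z)^2 = \mathbf{1}_A(1-Z)^2 + \mathbf{1}_{A^c}Z^2$ (check the two cases $\omega\in A$ and $\omega\in A^c$ separately), the right-hand side equals $BS(Z)$, establishing \eqref{eq:area}.

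For \eqref{eq:area*} I would simply specialise \eqref{eq:area} to $Z=\Psi=P[A\,|\,\mathcal{H}]$: by \eqref{eq:Bstar} we have $L_{\mathcal{H}}^\ast(t) = L(\{\Psi>t\},t)$, so the already proven identity gives $2\int_0^1 L_{\mathcal{H}}^\ast(t)\,dt = BS(\Psi)$. Applying the Brier-score decomposition \eqref{eq:decomp} of Proposition~\ref{pr:brier} with $Z=\Psi$ collapses the calibration-loss term to zero, leaving $BS(\Psi)=E[\Psi(1-\Psi)]$, which is exactly the claimed formula.

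The only genuinely delicate step is the application of Fubini; everything else is bookkeeping. The hypotheses in the proposition ($Z$ with values in $[0,1]$, hence $\ell_Z$ bounded and jointly measurable on a product of a probability space with a finite interval) make this immediate, so the main obstacle is really just to be careful with the boundary conventions in the indicator $\mathbf{1}_{\{Z\le t\}}$ versus $\mathbf{1}_{\{Z<t\}}$ — but since $\{Z=t\}$ contributes a Lebesgue-null set in $t$ for each fixed $\omega$, the distinction does not affect the integrals.
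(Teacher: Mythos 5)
Your proof is correct and follows essentially the same route as the paper's: expressing $L(\{Z>t\},t)$ as $E[\ell_Z(\cdot,t)]$, swapping the order of integration by Fubini, evaluating the inner $t$-integrals pathwise to get $\mathbf{1}_A(1-Z)^2+\mathbf{1}_{A^c}Z^2=(\mathbf{1}_A-Z)^2$, and then obtaining \eqref{eq:area*} by combining \eqref{eq:area} with the decomposition \eqref{eq:decomp} at $Z=\Psi$. No substantive differences.
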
 
Note that \eqref{eq:area} and \eqref{eq:area*} differ from the statements 
in Theorem~14 and Theorem~29 respectively of \cite{hernandez2012unified} by
the factor $1/2$ being applied to the Brier score on the right-hand side. This is a consequence
of \cite{hernandez2012unified} having scaled the Bayes error terms by the factor 2.\\[1ex]
\noindent\textbf{Proof of Proposition~\ref{pr:area}.} 
Starting from the left-hand side of \eqref{eq:area}, we obtain by making use
of \eqref{eq:ell} and Fubini's theorem
\begin{align*}
\int_0^1 L(\{Z > t\},\,t)\,dt & \ = \ \int_0^1 E[\ell_Z(\cdot, t)]\,dt\\
& \ = \ E\left[ \mathbf{1}_A \int_Z^1 (1-t)\,dt + \mathbf{1}_{A^c} \int_0^Z t\,dt \right]\\
& \ = \ E\bigl[\mathbf{1}_A\,(1-Z)^2/2 + \mathbf{1}_{A^c}\,Z^2/2\bigr].
\end{align*}
Observing that $\mathbf{1}_A\,(1-Z)^2 + \mathbf{1}_{A^c}\,Z^2 = (\mathbf{1}_A - Z)^2$ implies 
\eqref{eq:area}. 
Eq.~\eqref{eq:area*} follows by combining \eqref{eq:area} and \eqref{eq:decomp} (with $Z = \Psi$).
\hfill $\Box$

\subsection{Main result} 
The following theorem basically is a restatement of 
Theorem~2 of \cite{langford2005estimating}, with some changes to the notation and
a different representation of the `probing predictor' in \eqref{eq:combination}. See
also Remark~\ref{rm:Langford} below for more details on how Theorem~\ref{th:Langford} differs
from the original version given by \cite{langford2005estimating}.

\begin{theorem} \label{th:Langford}
Under Assumption~\ref{as:setting}, define the cost-weighted mean loss $L(H,t)$ 
and the cost-weighted Bayes loss $L_{\mathcal{H}}^\ast(t)$
for $H \in \mathcal{H}$ and $0 \le t \le 1$ as in Definition~\ref{de:Bayes}.
Assume that for each $t\in (0,1)$ a classifier represented by $H(t) \in \mathcal{H}$ is given such that the
function $h: (0,1) \times \Omega \to \{0,1\}$ with 
$$h(t, \omega) \ = \ \begin{cases}
1, & \text{if}\ \omega\in H(t),\\
0, & \text{if}\ \omega\notin H(t),
\end{cases}$$
is $\mathcal{B}((0,1)) \otimes \mathcal{H}$-measurable. Define the $\mathcal{H}$-measurable 
random variable $Z$ by 
\begin{subequations}
\begin{equation}\label{eq:combination}
Z(\omega) \ =\ \int_0^1 h(t, \omega)\,dt.
\end{equation}
Let $\Psi = P[A\,|\,\mathcal{H}]$. Then it follows for the calibration loss 
$E\bigl[(Z - \Psi)^2\bigr]$ of $Z$ that
\begin{equation}\label{eq:Langford}
\begin{split}
E\bigl[(Z - \Psi)^2\bigr] & \ = \ 2\,\int_0^1 L(\{Z > t\}, t) - L_{\mathcal{H}}^\ast(t) \,dt\\
 & \ \le \ 2\,\int_0^1 L(H(t), t) - L_{\mathcal{H}}^\ast(t) \,dt
\end{split} 
\end{equation}
\end{subequations}
\end{theorem}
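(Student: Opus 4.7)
The plan is to split the proof into two parts: first establish the equality in \eqref{eq:Langford} by invoking Proposition~\ref{pr:area} together with the Brier score decomposition \eqref{eq:decomp}; then derive the inequality by a Fubini-type interchange that reduces everything to a pointwise rearrangement inequality.

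First I would check that $Z$ in \eqref{eq:combination} is well-defined, $\mathcal{H}$-measurable (by Fubini applied to the jointly measurable $h$), and takes values in $[0,1]$ (since $h\in\{0,1\}$). Hence Proposition~\ref{pr:area} gives $2\int_0^1 L(\{Z>t\},t)\,dt = BS(Z)$, while \eqref{eq:area*} gives $2\int_0^1 L_{\mathcal{H}}^\ast(t)\,dt = E[\Psi(1-\Psi)]$. Applying the decomposition \eqref{eq:decomp} to the $\mathcal{H}$-measurable $Z$ yields
\begin{align*}
E[(\Psi - Z)^2] & \ = \ BS(Z) - E[\Psi(1-\Psi)] \\
 & \ = \ 2\int_0^1 L(\{Z>t\},t)\,dt \,-\, 2\int_0^1 L_{\mathcal{H}}^\ast(t)\,dt,
\end{align*}
which is exactly the first equality in \eqref{eq:Langford}.

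For the inequality it suffices to show $\int_0^1 L(\{Z>t\},t)\,dt \le \int_0^1 L(H(t),t)\,dt$. Expanding $L$ via \eqref{eq:B} and using Fubini (legitimate thanks to joint measurability of $h$ and boundedness of the integrands), the right-hand side becomes
$$\int_0^1 L(H(t),t)\,dt \ = \ E\!\left[\mathbf{1}_A \int_0^1 (1-t)\bigl(1-h(t,\cdot)\bigr)\,dt \,+\, \mathbf{1}_{A^c} \int_0^1 t\,h(t,\cdot)\,dt\right],$$
while rewriting $L(\{Z>t\},t)$ and then integrating in $t$ first gives
$$\int_0^1 L(\{Z>t\},t)\,dt \ = \ E\!\left[\mathbf{1}_A (1-Z)^2/2 \,+\, \mathbf{1}_{A^c} Z^2/2\right].$$
Thus the desired inequality reduces to two pointwise inequalities: for $\omega\in A$, $(1-Z(\omega))^2/2 \le \int_0^1 (1-t)(1-h(t,\omega))\,dt$; for $\omega\in A^c$, $Z(\omega)^2/2 \le \int_0^1 t\,h(t,\omega)\,dt$.

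Both follow from the bathtub principle applied with the weights $(1-t)$ and $t$ on $[0,1]$. Setting $E_\omega = \{t\in[0,1]: h(t,\omega)=1\}$, we have $Z(\omega) = |E_\omega|$; among sets of fixed Lebesgue measure $Z(\omega)$, the integral $\int_{E_\omega} t\,dt$ is minimised by $E_\omega = [0,Z(\omega)]$, giving the lower bound $Z(\omega)^2/2$, and similarly $\int_{E_\omega^c}(1-t)\,dt$ is minimised by $E_\omega^c = [Z(\omega),1]$, giving $(1-Z(\omega))^2/2$. I expect the only real obstacle to be bookkeeping — once the Fubini step is made and the problem splits by the class indicator, each half is simply the bathtub/rearrangement inequality, and equality is attained precisely when $h(\cdot,\omega) = \mathbf{1}_{[0,Z(\omega))}$ (which is exactly the case $H(t) = \{\Psi > t\}$, recovering the Bayes-optimal sufficient classifier).
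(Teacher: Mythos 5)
Your proof is correct. The first half (the equality) is exactly the paper's route: Proposition~\ref{pr:area} plus the decomposition \eqref{eq:decomp}. For the inequality, however, you take a genuinely cleaner path than the paper. The paper first rewrites $L(H(t),t)-L_{\mathcal{H}}^\ast(t)$ as $\int_{H(t)\triangle\{\Psi>t\}}|\Psi-t|\,dP$ (which requires $L_{\mathcal{H}}^\ast(t)=L(\{\Psi>t\},t)$), applies Fubini, and then proves a pointwise-in-$\omega$ comparison between the $h$-based and $Z$-based integrands by introducing the disagreement sets $I$ and $J$, observing $\ell(I)=\ell(J)$, and splitting into the cases $\Psi(\omega)<Z(\omega)$ and $\Psi(\omega)\ge Z(\omega)$. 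You instead cancel the common $\int_0^1 L_{\mathcal{H}}^\ast(t)\,dt$ term, apply Fubini directly to $\int_0^1 L(H(t),t)\,dt$, and reduce to the two pointwise bathtub inequalities $\int_{E_\omega}t\,dt\ge Z(\omega)^2/2$ and $\int_{E_\omega^c}(1-t)\,dt\ge(1-Z(\omega))^2/2$ for $|E_\omega|=Z(\omega)$, which makes $\Psi$ disappear entirely from the inequality step and eliminates both the case analysis and the symmetric-difference bookkeeping. What the paper's version buys in exchange is an explicit identification of where the slack sits, namely on $H(t)\triangle\{\Psi>t\}$ weighted by $|\Psi-t|$, which is conceptually closer to the original argument of Langford and Zadrozny; your version buys brevity and a transparent equality condition ($h(\cdot,\omega)=\mathbf{1}_{[0,Z(\omega))}$ a.e.). Both are complete; the Fubini steps are justified by joint measurability and boundedness in either case.
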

See Appendix~\ref{se:Langford} for a proof of Theorem~\ref{th:Langford}.

\begin{remark}\label{rm:Langford}\emph{%
Comments on Theorem~\ref{th:Langford}:
\begin{enumerate} 
\item Actually, \cite{langford2005estimating} proved in Theorem~2 the 
following version of Theorem~\ref{th:Langford} (rephrased in the notation of this paper):\\
\emph{With the notation $\bar{L}(H,t) = \frac{1-t}{t}\,P[A\cap H^c] + P[A^c\cap H]$  and 
$N_t = \frac{1-t}{t}\,P[A] + P[A^c]$ for $t \in(0,1)$, it holds that}
\begin{equation}
E\bigl[(Z - \Psi)^2\bigr] \ \le \ 2\,\max(P[A], 1-P[A])\, 
    \int_0^1 \frac{\bar{L}(H(t),t) - \bar{L}(\{\Psi>t\},t)}{N_t}\,dt.
\end{equation}
Observing that $t\,N_t \le \max(P[A], 1-P[A])$, one finds that
\begin{eqnarray*}
\lefteqn{\max(P[A], 1-P[A])\, 
    \int_0^1 \frac{\bar{L}(H(t),t) - \bar{L}(\{\Psi>t\},t)}{N_t}\,dt}\\
 &    \ge \ & \int_0^1 t\,\bigl(\bar{L}(H(t),t) - \bar{L}(\{\Psi>t\},t)\bigr)\,dt \\
  &  = \  &\int_0^1 L(H(t), t) - L_{\mathcal{H}}^\ast(t)  \,dt.   
\end{eqnarray*}
At first glance, therefore, the upper bound for $E\bigl[(Z - \Psi)^2\bigr]$ of Theorem~2 of
\cite{langford2005estimating} is less strict than the upper bound provided by Theorem~\ref{th:Langford}.
However, a closer inspection of the proof of Theorem~2 of \cite{langford2005estimating} reveals
that they indeed proved \eqref{eq:Langford} but due
to their different notion of approximation error ended up in stating an apparently
weaker result. With a view to applications, however, $\bar{L}(H,t)$ reflects
the fact that the weight-dependent classifiers from the right-hand side of \eqref{eq:combination}
likely would be trained on samples in which only one class is re-weighted -- not 
on samples with both classes re-weighted as suggested by definition \eqref{eq:B}
of $L(H, t)$.
\item The `probing predictor' $\hat{p}(x)$ of \cite{langford2005estimating} 
(which corresponds to $Z$ in the notation of 
this paper) is obtained by `sort[ing] the results of the classifiers to make the sequence monotonic
(all zeroes before all ones)'. Defining $Z$ as the average of the $h(t, \cdot)$ has the same 
effect as sorting but has the advantage to be viable also in the infinite sample case of this paper.
\item \cite{langford2005estimating}  commented on their Theorem~2 as follows: `This is a strong
theorem in the sense that it ties the error in the probability predictions to the \emph{average relative}
importance weighted loss of the classifiers. Using the average loss over $w$ [$t$ in 
the notation of this paper] results in a more powerful
statement than using the maximal loss, because it is easier to obtain a set of classifiers which have
small average loss than to obtain a set of classifiers, all with a small loss. Using a loss that is relative
to the loss of the Bayes optimal classifier means that the theorem applies even when the fundamental noise rate
is large.' 
\item In addition to the comment of \cite{langford2005estimating} on the theorem as quoted in 3), 
it might be worthwhile 
to point out that the `probing predictor' $\hat{p}(x)$ (or $Z$ in the notation of this paper) performs better
on average than the classifiers $H_t$, $0 <t < 1$ that are the input to the estimation procedure (besides 
the data). This is clearly stated in Theorem~\ref{th:Langford} above but was implicitly stated
already in the proof of Theorem~2 of \cite{langford2005estimating}.
\end{enumerate}
}
\end{remark}

The following example illustrates how \eqref{eq:combination} can be used 
to combine two estimators $Z_1$ and $Z_2$ of the posterior class probability $P[A\,|\,\mathcal{H}]$ 
to form another estimator $Z$ with a potentially smaller Brier score.

\begin{example}\emph{%
Let $Z_1, Z_2$ be $\mathcal{H}$-measurable random variables with values in $[0,1]$ and
let $0 < z < 1$ be fixed. Define
\begin{align*}
Z & \ = \ \int_0^z \mathbf{1}_{\{Z_1>t\}}\,dt + \int_z^1 \mathbf{1}_{\{Z_2>t\}}\,dt\\
    & \ = \ \min(z, Z_1) + (Z_2-z)\,\mathbf{1}_{\{Z_2>z\}}.
\end{align*}
If $z$ is chosen in such a way that 
$$L(\{Z_1 > t\}, t) \le L(\{Z_2 > t\}, t) \quad \iff \quad t \le z,$$ 
then by \eqref{eq:Langford} it follows that
$$BS(Z) \ \le \ \min\bigl(BS(Z_1), BS(Z_2)\bigr).$$
Such a number $z$ could be identified by inspection of the Brier curves of $Z_1$ and $Z_2$,
see for example the criss-crossing dashed and dash-dotted curves in 
Figure~\ref{fig:Brier} below. 
} \hfill \qed
\end{example}

\section{Brier curves}
\label{se:curves}

The name and notion of Brier curves as discussed in this section were introduced by \cite{hernandez2011brier}.
At about the same time, the concept was considered by
\cite{reid2011information} who called it `risk curves for costs'. However, such 
curves -- without being given specific names -- were presented much earlier in the literature
(see \citealp{Ehm&Gneiting&Jordan&Krueger}, p.~519, for more details).

\subsection{Definition and properties}

\begin{definition}[Brier curve]\label{de:curves}
For a probabilistic classifier $Z$ (i.e.\ an $\mathcal{H}$-measurable $[0,1]$-valued random variable), 
the \emph{Brier curve} $t \mapsto B(t)$ is defined as 
$$B(t) \ = \ L(\{Z > t\}, t), \qquad t \in [0,1],$$
where $L(\cdot, t)$ denotes the cost-weighted mean loss of Definition~\ref{de:Bayes}.
\end{definition}
The definition of Brier curve used here differs from the definition in Eq.~(10) of \cite{hernandez2011brier}
by a factor~2. By multiplying with 2, one would achieve equality between the area under the Brier curve
and the Brier score (see comments below).
The `Brier curve for skew' as defined in Eq.~(11) of \cite{hernandez2011brier} is
just the Brier curve as in Definition~\ref{de:curves} (but scaled with factor 2) with prior probability
$P[A] = 1/2$. Both \cite{reid2011information} and \cite{hernandez2011brier} presented
charts showing idealised population-level and sample-based, empirical Brier curves. See also
Figure~\ref{fig:Brier} below for examples of Brier curves.

The following properties hold for Brier curves in general:
\begin{itemize} 
\item In terms of the two conditional distribution functions $t \mapsto P[Z\le t\,|\,A]$
and $t \mapsto P[Z\le t\,|\,A^c]$, the Brier curve for $Z$ can be represented as
\begin{equation}\label{eq:distrib}
B(t) \ = \ (1-t)\,P[A]\,P[Z\le t\,|\,A] + t\,(1-P[A])\,(1-P[Z\le t\,|\,A^c]), \  
0 \le t \le 1.
\end{equation}
\item By \eqref{eq:distrib}, $t \mapsto B(t)$ is right-continuous on
$[0,1)$ and has left limits for all $t \in (0,1]$.
\item $B(t) \ge 0$ for all $t \in [0,1]$, $B(1) = 0$, $B(0) = P[A]\,P[Z=0\,|\,A]$.
\item Denoting by $B_Z(t)$, $B_{Z^\ast}(t)$ and $B_\Psi(t)$ the Brier curves
of a probabilistic classifier $Z$, of its calibrated version $Z^\ast = P[A\,|\,\sigma(Z)]$,
and of the posterior probability $\Psi = P[A\,|\,\mathcal{H}]$ respectively, it holds that
\begin{equation}
B_Z(t) \ \ge \ B_{Z^\ast}(t) \ \ge \ B_\Psi(t), \qquad t \in [0,1].
\end{equation}
\item By Proposition~\ref{pr:area}, the Brier score for $Z$ equals twice the area below the Brier curve
of $Z$.
\item If both conditional distributions $P[Z\in \cdot |\,A]$ and $P[Z\in \cdot |\,A^c]$
have continuous Lebesgue densities $g_A$ and $g_{A^c}$ respectively, then the Brier curve $B(t)$ 
of $Z$ is continuously differentiable in $(0,1)$ with
\begin{equation}\label{eq:derivative}
\frac{d B}{d\, t}(t) \ = \ 1 - P[A] - P[Z\le t] + (1-t)\,P[A]\,g_A(t) - t\,(1-P[A])\,g_{A^c}(t).
\end{equation}
\end{itemize}

The following proposition not only applies to the `full information' posterior probability
$\Psi = P[A\,|\,\mathcal{H}]$, but also to any calibrated probabilistic classifier 
$Z = P[A\,|\,\sigma(Z)]$ (case $\mathcal{G} = \sigma(Z)$).
\begin{proposition}[Properties of Brier curves for calibrated probabilistic classifiers]
\label{pr:properties}
Under Assumption~\ref{as:setting}, let $\mathcal{G}$ denote some sub-$\sigma$-field of $\mathcal{H}$
and define $\Psi = P[A\,|\,\mathcal{G}]$. Then the Brier curve $t \mapsto B(t)$ 
of $\Psi$ according to Definition~\ref{de:curves}
has the following properties:
\begin{enumerate} 
\item \label{it:one} $B$ is concave and continuous on $[0,1]$, with $B(0) = 0$.
\item \label{it:two} For all $0 \le t \le 1$, it holds that
\begin{equation}\label{eq:CohenGeneralised}
\min(t, 1-t)\,E[\Psi\,(1-\Psi)] \ \le\ B(t) \ \le \ E[\Psi\,(1-\Psi)].
\end{equation}
\item \label{it:three} In \eqref{eq:CohenGeneralised}, we have 
$\min(t, 1-t)\,E[\Psi\,(1-\Psi)] = B(t)$
if and only if $\Psi$ is constant with $P[A] = 0 = \Psi$ or $P[A] = 1 = \Psi$. On the
right-hand side of \eqref{eq:CohenGeneralised}, it holds that
$B(t) = E[\Psi\,(1-\Psi)]$ 
if and only if $\Psi$ is constant with $P[A] = t = \Psi$.
\item \label{it:derivative} $B$ has for each $t\in[0,1)$ the right derivative 
$\frac{d^+}{d\,t}B(t) = 1  - P[A] - P[\Psi \le t]$ and
for each $t\in(0,1]$ the left derivative $\frac{d^-}{d\,t}B(t)= 1  - P[A] - P[\Psi < t]$.
\item \label{it:max} We call, for a real-valued random variable $Z$ and $\alpha \in (0,1)$, 
any number $z$ with 
$P[Z<z] \le \alpha \le P[Z\le z]$ an $\alpha$-quantile of $Z$. Then for all $(1-P[A])$-quantiles
 $t^\ast$ of $\Psi$
it holds that 
$$\max\limits_{0\le t\le 1} B(t) =  B(t^\ast).$$ 
\item \label{it:transform} Assume there is another probability
measure $Q$ on $(\Omega, \mathcal{A})$ such that $Q$ and $P$ are related through
prior probability shift on $\mathcal{G}$, i.e.\ for some $p, q \in(0,1)$ it holds that
\begin{subequations}
\begin{gather*}
q = Q[A] \neq P[A] = p, \qquad \text{as well as}\\
Q[G\,|\,A] = P[G\,|\,A] \quad \text{and}\quad Q[G\,|\,A^c] = P[G\,|\,A^c], \qquad G \in \mathcal{G}.
\end{gather*}
Let $\Psi_Q = Q[A\,|\,\mathcal{G}]$ and $\Psi_P = P[A\,|\,\mathcal{G}]$, and define for
$t \in [0,1]$:
\begin{equation}
\begin{split}
B_Q(t) &\ =\ (1-t)\,Q[A\cap \{\Psi_Q \le t\}] + t\,Q[A^c\cap \{\Psi_Q > t\}], \quad \text{and}\\
B_P(t) &\ =\ (1-t)\,P[A\cap \{\Psi_P \le t\}] + t\,P[A^c\cap \{\Psi_P > t\}].
\end{split}
\end{equation}
Then it follows that
\begin{equation}\label{eq:transform}
\begin{split}
\frac{B_Q(t)}{(1-t)\,q} &\ = \ \frac{B_P(s)}{(1-s)\,p}, \quad 0<t<1,\\
\text{with}\quad s & \ = \ \frac{(1-q)\,p\,t}{(p-q)\,t + q\,(1-p)}.
\end{split}
\end{equation}
\end{subequations}
\end{enumerate}
\end{proposition}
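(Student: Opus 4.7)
The key to all six parts is the calibrated‑representation
$B(t) = E[\min((1-t)\Psi,\, t(1-\Psi))]$.
This follows because $\Psi = P[A\,|\,\mathcal{G}]$ gives $P[A \cap \{\Psi \le t\}] = E[\Psi\,\mathbf{1}_{\{\Psi \le t\}}]$ and $P[A^c \cap \{\Psi > t\}] = E[(1-\Psi)\,\mathbf{1}_{\{\Psi > t\}}]$, and the comparison $(1-t)\Psi \le t(1-\Psi) \iff \Psi \le t$ shows that the indicators in $B(t)$ pick out exactly the pointwise minimum. Part~\ref{it:one} follows at once: for each $\omega$, $t \mapsto \min((1-t)\Psi(\omega),\, t(1-\Psi(\omega)))$ is the minimum of two affine functions, hence concave and continuous with value $0$ at $t=0$; concavity and continuity pass to $B$ via expectation and dominated convergence, noting that on $\{\Psi=0\}$ one has $\Psi = 0$ and on $\{\Psi=1\}$ one has $1-\Psi = 0$.

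For Part~\ref{it:two} I would derive both bounds pointwise. The upper bound reduces to $\min(1-t,\,1-\Psi) \le 1-\Psi$ on $\{\Psi \le t\}$ (and symmetrically on $\{\Psi > t\}$), so $\min((1-t)\Psi,\, t(1-\Psi)) \le \Psi(1-\Psi)$. The lower bound uses a short case analysis in $t$: for $t \le 1/2$ the identity $(1-t)\Psi - t\Psi(1-\Psi) = \Psi\,(1-2t+t\Psi) \ge 0$ handles the branch $\{\Psi \le t\}$, while $t(1-\Psi) - t\Psi(1-\Psi) = t(1-\Psi)^2 \ge 0$ handles $\{\Psi > t\}$; for $t \ge 1/2$ the roles swap, both times yielding $\min((1-t)\Psi,\,t(1-\Psi)) \ge \min(t,1-t)\,\Psi(1-\Psi)$. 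Taking expectations gives \eqref{eq:CohenGeneralised}. The equality characterisations in Part~\ref{it:three} come from tracking when these pointwise inequalities become equalities (for the upper bound this forces $\Psi \in \{0,t,1\}$ a.s., which combined with $E[\Psi]=P[A]$ and Assumption~\ref{as:setting} yields the stated condition; the lower bound characterisation is vacuous in the sense that under $0<P[A]<1$ the stated conditions are unattainable).

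For Part~\ref{it:derivative} I would use the algebraic rearrangement
$B(t) = E[\Psi\,\mathbf{1}_{\{\Psi \le t\}}] + t\,(P[\Psi>t] - P[A])$
and compute difference quotients directly. The crucial estimate is
$|E[(\Psi-t)\,\mathbf{1}_{\{t<\Psi\le t+h\}}]| \le h\,P[t<\Psi\le t+h] = o(h)$
by right-continuity of the distribution function of $\Psi$, after which the limit $(P[\Psi > t+h] - P[A]) \to 1 - P[\Psi \le t] - P[A]$ gives the right derivative. The left derivative is symmetric, using that $\{t-h < \Psi < t\} \downarrow \emptyset$ as $h \downarrow 0$. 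Part~\ref{it:max} then follows from standard concave analysis: a concave function is maximised exactly where the left derivative is nonnegative and the right derivative nonpositive, which translates via Part~\ref{it:derivative} into $P[\Psi < t^\ast] \le 1-P[A] \le P[\Psi \le t^\ast]$, the defining property of a $(1-P[A])$-quantile.

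The main obstacle is Part~\ref{it:transform}. My plan is to invoke Bayes' theorem in the form that $\Psi_R/(1-\Psi_R)$ is proportional (with prior odds as the proportionality constant) to the class-conditional likelihood ratio of $\mathcal{G}$, for $R \in \{P,Q\}$. Since the likelihood ratio is invariant under prior probability shift, one obtains the odds identity $\Psi_Q/(1-\Psi_Q) = \lambda\,\Psi_P/(1-\Psi_P)$ with $\lambda = q(1-p)/[p(1-q)]$. Solving the threshold condition $\Psi_Q > t$ for $\Psi_P$ gives $\{\Psi_Q > t\} = \{\Psi_P > s\}$ with $s$ exactly as in \eqref{eq:transform}. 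Then
\[
B_Q(t) = (1-t)\,q\,P[\{\Psi_P \le s\}\,|\,A] + t\,(1-q)\,P[\{\Psi_P > s\}\,|\,A^c],
\]
using that $\{\Psi_P \le s\} \in \mathcal{G}$ together with $Q[\cdot\,|\,A] = P[\cdot\,|\,A]$. The proof closes with the algebraic identity $s/[(1-s)p] = t(1-q)/[(1-t)q(1-p)]$, which is a short but sign-sensitive computation from the explicit form of $s$ (in particular, $1-s = q(1-p)(1-t)/[q(1-p)+t(p-q)]$). This algebraic verification is the most mechanical step, and I expect it to be the most error-prone part of the proof.
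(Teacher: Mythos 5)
Your argument follows essentially the same route as the paper's proof: the representation $B(t)=E[\min((1-t)\Psi,\,t(1-\Psi))]$ and the pointwise bounds $\min(t,1-t)\,x(1-x)\le\min((1-t)x,\,t(1-x))\le x(1-x)$ carry items \ref{it:one}--\ref{it:three}, concavity plus one-sided derivatives carry items \ref{it:derivative} and \ref{it:max} (your direct difference-quotient computation is an equivalent substitute for the paper's differentiation of $E[\min(t,\Psi)]$ under the expectation via dominated convergence), and the prior-probability-shift odds identity carries item \ref{it:transform}, which you derive from Bayes' formula where the paper cites Saerens et al.\ and then checks the same algebraic identity $\frac{s(1-p)}{(1-s)p}=\frac{t(1-q)}{(1-t)q}$. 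The one step that does not quite close --- in your sketch just as in the paper's proof, which gives no detail here --- is the `only if' direction of the upper-bound equality in item \ref{it:three}: pointwise equality only forces $\Psi\in\{0,t,1\}$ a.s., and combining this with $E[\Psi]=P[A]$ does not yield $\Psi\equiv t$ (a $\Psi$ taking each of the values $0$, $t$ and $1$ with positive probability also satisfies $B(t)=E[\Psi\,(1-\Psi)]$), so either an additional argument or a weaker formulation of the equality characterisation is needed at that point.
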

See Appendix~\ref{se:properties} for a proof of Proposition~\ref{pr:properties}.

\begin{itemize} 
\item \eqref{eq:CohenGeneralised} generalises Theorem~1 of \cite{Cohen&Goldszmidt} from Bayes error 
to cost-weighted Bayes loss. 
 \cite{zhao2013beyond} presented a general entropy-inspired approach to inequalities for
the cost-sensitive error. \eqref{eq:CohenGeneralised} can also be derived from inequality~(31)
of \cite{zhao2013beyond}.
\item Compare Proposition~\ref{pr:properties}, item~\ref{it:derivative} to \eqref{eq:derivative}.
Item~\ref{it:derivative} applies in full generality under Assumption~\ref{as:setting} while for
\eqref{eq:derivative} the existence of continuous Lebesgue densities for the conditional
feature distributions is needed. If this condition is satisfied then the comparison implies
that (in the notation of \eqref{eq:derivative}) the following equation is necessary for
the probabilistic classifier to be calibrated:
\begin{equation}
(1-t)\,P[A]\,g_A(t)  \ = \ t\,(1-P[A])\,g_{A^c}(t), \qquad 0 < t < 1.
\end{equation}
\item Proposition~\ref{pr:properties}, item~\ref{it:transform} shows that Brier curves 
for true posterior class properties can easily be transformed into each other as long
as the underlying distributions $Q$ and $P$ differ only by prior probability shift. Hence,
in principle, for studying the properties of Brier curves for calibrated probabilistic 
classifiers, it suffices to look at the case $P[A] = 1/2$.
\item Proposition~\ref{pr:properties}, item~\ref{it:max} motivates Proposition~\ref{pr:cov} below 
with potentially useful bounds for the refinement loss related to $\Psi= P[A\,|\,\mathcal{G}]$.
\end{itemize}

\begin{remark}\label{rm:relation}\emph{%
In the setting of Proposition~\ref{pr:properties}, 
the problem to compute the value of the Brier curve for $\Psi$ at $t \in (0,1)$ (or equivalently 
the cost-weighted Bayes loss of \eqref{eq:Bstar}) can be 
treated as a problem to determine an (unweighted) Bayes error, by making use of the following
easy to prove relation:
\begin{subequations}
\begin{equation}\label{eq:rel1}
\begin{split}
B(t) & \ = \ \bigl((1-t)\,P[A] + t\,(1-P[A])\bigr) 
    \min\limits_{G\in\mathcal{G}} \bigl(q\,P[G^c\,|\,A] + 
    (1-q)\,P[G\,|\,A^c] \bigr), \ \text{with}\\
 q & \ = \ \frac{(1-t)\,P[A]}{(1-t)\,P[A] + t\,(1-P[A])}.   
\end{split}
\end{equation}
Conversely, also the unweighted Bayes error can be determined by solving an optimisation problem
for a cost-weighted mean loss (or determining a value on a Brier curve):
\begin{equation}\label{eq:rel2}
\min\limits_{G\in\mathcal{G}} \bigl(P[A \cap G^c] + P[A^c \cap G] \bigr)
    \ = \ 2\,B_Q\bigl(1-P[A]\bigr).
\end{equation}
\end{subequations}
Here, the notation $B_Q$ indicates that the value of the Brier curve is determined
for a probability measure $Q$ such that the prior probability of $A$ under $Q$ is $Q[A] = 1/2$. 
} \hfill \qed
\end{remark}

\begin{remark} \label{rm:ROC}\emph{%
By making use of \eqref{eq:rel1} and \eqref{eq:rel2}, one can deploy 
Theorem~21 of \cite{reid2011information} to derive a representation of the Brier curve for 
$\Psi$ as in Proposition~\ref{pr:properties} in terms of the power function $\beta(\alpha)$
of the Neyman-Pearson test at size $\alpha$ based on the ratio of the densities of $P[G\,|\,A]$ and $P[G\,|\,A^c]$, 
$G\in\mathcal{G}$, and vice versa:
\begin{subequations}
\begin{equation}
\begin{split}
B(t) & \ = \ \bigl((1-t)\,P[A] + t\,(1-P[A])\bigr) 
    \min\limits_{\alpha \in[0,1]} \bigl((1-q)\,\alpha + q\, (1-\beta(\alpha))\bigr), \quad \text{with}\\
 q & \ = \ \frac{(1-t)\,P[A]}{(1-t)\,P[A] + t\,(1-P[A])},
\end{split}
\end{equation}
and
\begin{equation}
\beta(\alpha)  \ = \ \inf\limits_{p \in (0,1]} 
    \frac{(1-p)\,\alpha + p  - 2\,B_Q(1-p)}{p},
\end{equation}
\end{subequations}
where the notation $B_Q$ indicates that the value of the Brier curve is determined
for a probability measure $Q$ such that the prior probability of $A$ under $Q$ is $Q[A] = 1/2$
(as for \eqref{eq:rel2}).
See 
Section~6 of \cite{reid2011information} for more information on the relationship between
risk curves for costs, so-called risk curves for priors, and ROC curves.
} \hfill \qed
\end{remark} 

The lower bound for the refinement loss component of the Brier score (see Proposition~\ref{pr:brier} above)
in the following result is inspired by Proposition~\ref{pr:properties}, item~\ref{it:max} and is
a consequence of Proposition~\ref{pr:properties}, item~\ref{it:two}.

\begin{proposition} \label{pr:cov}
Under Assumption~\ref{as:setting}, denote
the probability of $A$ conditional on $\mathcal{H}$ by $\Psi = P[A\,|\,\mathcal{H}]$. 
Assume that there is a number $q \in [0,1]$ with $P[\Psi \le q] = 1-P[A]$. 
Then the following bounds apply for the refinement loss $E[\Psi\,(1-\Psi)]$:
\begin{equation}\label{eq:corr}
1 - \mathrm{corr}[\mathbf{1}_A, \,\mathbf{1}_{\{\Psi > q\}}] \ \le \
\frac{E[\Psi\,(1-\Psi)]}{P[A]\,(1-P[A])} \ \le \
1 - \mathrm{corr}[\mathbf{1}_A, \,\mathbf{1}_{\{\Psi > q\}}]^2.
\end{equation}
\end{proposition}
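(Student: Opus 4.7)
Write $p = P[A]$ and $H = \{\Psi > q\}$. My plan is to identify the Brier curve value $B(q)$ of $\Psi$ (Definition~\ref{de:curves}) with an explicit function of $\mathrm{corr}[\mathbf{1}_A,\mathbf{1}_H]$, and then read off the two bounds in \eqref{eq:corr} from item~\ref{it:two} of Proposition~\ref{pr:properties} on one side and Cauchy--Schwarz on the other.

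The hypothesis $P[\Psi\le q] = 1-p$ immediately forces $P[H] = p$, so $\mathbf{1}_A$ and $\mathbf{1}_H$ share mean $p$ and variance $p(1-p)$. Substituting $P[A\cap H^c] = P[A^c\cap H] = p - P[A\cap H]$ into $B(q) = (1-q)\,P[A\cap H^c] + q\,P[A^c\cap H]$, the $(1-q)$ and $q$ weights collapse to $1$ and yield $B(q) = p - P[A\cap H] = p(1-p) - \mathrm{cov}[\mathbf{1}_A,\mathbf{1}_H]$. Since $\sqrt{\var[\mathbf{1}_A]\,\var[\mathbf{1}_H]} = p(1-p)$, this rearranges to the clean identity
$$B(q)\ =\ p(1-p)\,\bigl(1 - \mathrm{corr}[\mathbf{1}_A,\mathbf{1}_H]\bigr).$$
The left-hand inequality in \eqref{eq:corr} then follows by dividing the right half of \eqref{eq:CohenGeneralised} (i.e.\ $B(q) \le E[\Psi(1-\Psi)]$, applied at $t=q$) through by $p(1-p)$.

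For the right-hand inequality in \eqref{eq:corr}, I would exploit that $H \in \sigma(\Psi)\subset\mathcal{H}$: the tower property gives $\mathrm{cov}[\mathbf{1}_A,\mathbf{1}_H] = \mathrm{cov}[\Psi,\mathbf{1}_H]$, and Cauchy--Schwarz yields $\mathrm{cov}[\Psi,\mathbf{1}_H]^2 \le \var[\Psi]\cdot p(1-p)$. Dividing by $p^2(1-p)^2$ turns this into $\mathrm{corr}[\mathbf{1}_A,\mathbf{1}_H]^2 \le \var[\Psi]/(p(1-p))$. Combining with $\var[\Psi] = p(1-p) - E[\Psi(1-\Psi)]$ (which follows from $E[\Psi]=p$) and rearranging produces the upper bound in \eqref{eq:corr}. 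I anticipate no real obstacle; the only slightly delicate step is the tower-property identification $\mathrm{cov}[\mathbf{1}_A,\mathbf{1}_H] = \mathrm{cov}[\Psi,\mathbf{1}_H]$, which is exactly what allows Cauchy--Schwarz to bring $\var[\Psi]$ into play.
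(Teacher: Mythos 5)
Your argument is correct. For the left-hand inequality of \eqref{eq:corr} you and the paper follow essentially the same path: both exploit the moment-matching consequence $P[\{\Psi>q\}]=P[A]$ of the quantile hypothesis and the upper bound $B(q)\le E[\Psi(1-\Psi)]$ from item~\ref{it:two} of Proposition~\ref{pr:properties}; your observation that $P[A\cap H^c]=P[A^c\cap H]$ so that the cost weights collapse and $B(q)=P[A](1-P[A])\bigl(1-\mathrm{corr}[\mathbf{1}_A,\mathbf{1}_H]\bigr)$ is a tidier route to the same identity that the paper reaches by manipulating $E[\Psi\,\mathbf{1}_{\{\Psi\le q\}}]$. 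For the right-hand inequality the mechanics differ: the paper compares the $L^2$-projection of $\mathbf{1}_A$ onto all $\mathcal{H}$-measurable variables with its projection onto the affine span of $\mathbf{1}_{\{\Psi>q\}}$, computing the optimal regression coefficients explicitly, whereas you pass through the tower-property identity $\mathrm{cov}[\mathbf{1}_A,\mathbf{1}_H]=\mathrm{cov}[\Psi,\mathbf{1}_H]$, apply Cauchy--Schwarz to get $\mathrm{corr}[\mathbf{1}_A,\mathbf{1}_H]^2\le \var[\Psi]/\bigl(P[A](1-P[A])\bigr)$, and finish with the resolution--uncertainty identity $\var[\Psi]=P[A](1-P[A])-E[\Psi(1-\Psi)]$ (Eq.~\eqref{eq:alternative}). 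The two are of course two faces of the same $L^2$ geometry, but yours avoids computing the regression coefficients and makes transparent that the upper bound is really a statement about how much of $\var[\Psi]$ the single indicator $\mathbf{1}_{\{\Psi>q\}}$ can capture; the paper's version makes explicit the interpretation of $\mathbf{1}_{\{\Psi>q\}}$ as a (suboptimal) linear predictor of $\mathbf{1}_A$. Both proofs are complete; the only hypotheses you implicitly use, $0<P[A]<1$ (so the correlations are well defined), are guaranteed by Assumption~\ref{as:setting}.
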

See Appendix~\ref{se:cov} for a proof of Proposition~\ref{pr:cov}.

In Proposition~\ref{pr:cov}, $\mathbf{1}_{\{\Psi > q\}}$ is a moment-matching estimator of $\mathbf{1}_A$ 
(same expected value
and variance). If $\Psi$ is not exactly known but there is a scoring classifier $S$ (i.e.\ a $\mathcal{H}$-measurable 
random variable) which is strongly comonotonic
with $\Psi$ then the event $\{\Psi > q\}$ can be replaced by $\{S > q^\ast\}$ where $q^\ast$ denotes
a $(1-P[A])$-quantile of $S$. Thus bounds for the refinement loss could be determined without 
exact knowledge of $\Psi$.

\subsection{Application}
\label{se:application}

In this section, based on the observations made before, we suggest 
\begin{itemize} 
\item what can be done to avoid or at least control the grouping (or information gap) loss when
estimating posterior class probabilities, and
\item how this can be supported by the use of Brier curves.
\end{itemize}
\begin{figure}
\begin{center}
\includegraphics[width=10cm]{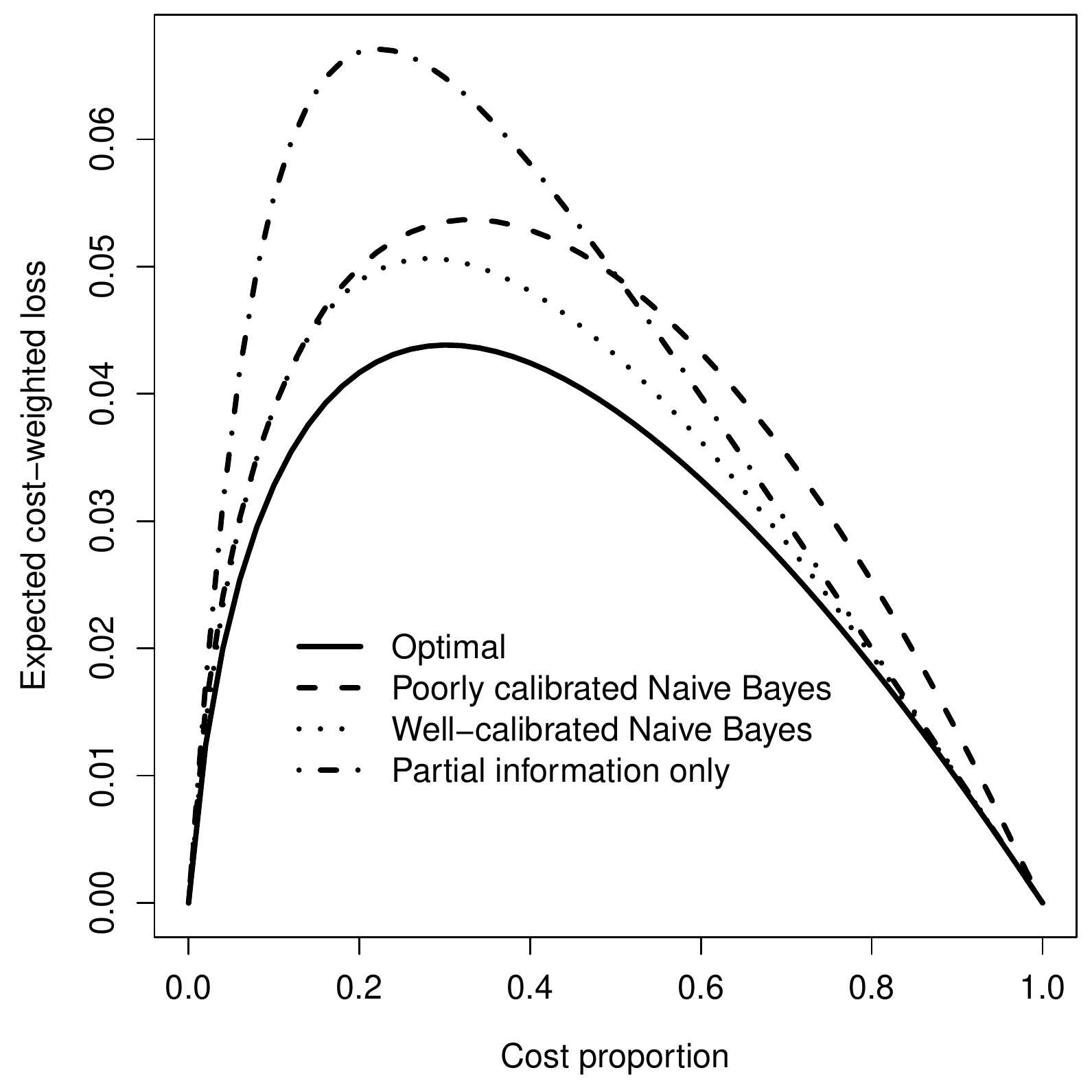}
\caption{{\small{}Examples of Brier Curves. The prior probability of the positive class is
10\%. See 
Section~\ref{se:application} for explanations of the different curves.}\label{fig:Brier}}
\end{center}
\end{figure}
In Figure~\ref{fig:Brier}, Brier curves are used to illustrate the decomposition of the
Brier score as defined in Proposition~\ref{pr:brier} and the concept of grouping loss in particular.
\begin{itemize} 
\item For Figure~\ref{fig:Brier}, a model based on bivariate 
normal feature distributions with different mean
vectors but identical covariance matrices has been chosen. 
The two components of the normal distributions are positively correlated.
\item The solid curve shows the Brier curve of the true posterior class probability, making use of
the full information available (i.e.\ both marginal components and their correlation). The area under this curve
is half of the optimal Brier score for the model and, therefore, also equals half of the refinement
loss (or irreducible loss) associated with the model.
\item The dash-dotted curve shows the Brier curve of a `partial' posterior class probability, making use
of only part of the available information (the first component of the normal distributions). The 
area between this curve and the solid curve equals half of the calibration loss, i.e.\ the Brier
score for the partial posterior class probability minus the refinement loss for the model.
In this case, the calibration loss is identical to the grouping loss because the partial posterior
class probability is perfectly calibrated given the available information in the shape of the first
component only.
\item The dashed curve shows the Brier curve of a misspecified posterior class probability
where the same formulae have been applied as for the true `full' posterior class probability but
with correlation set to zero. This corresponds to the naive Bayes approach to the estimation of the
posterior class probability and results in a poorly calibrated estimate. In this case, both
components of the calibration loss as described in \eqref{eq:refined} are positive.
\item The dotted curve shows the Brier curve of the perfectly calibrated posterior class probability
resulting from the naive Bayes approach. While the correlation here is taken into account for the
calibration, its misspecification in the design of the naive Bayes score still results in an
information gap (grouping loss) that is reflected in the area between the dotted curve and the solid
curve. The area between the dotted and dashed curves is the calibration loss component due
to using a misspecified formula for calibrating the naive Bayes score underlying the dashed curve (this
is called `group-wise calibration loss' in \citealp{kull2014reliability}).
\end{itemize}

Recall that in the context of approach~\ref{it:uni} of Section~\ref{se:calibration} to
the estimation of posterior class probabilities we are dealing with a hierarchy of scoring 
classifiers (under Assumption~\ref{as:setting}):
\begin{itemize} 
\item Scoring classifier \citep{hernandez2011brier}: 
$\mathcal{H}$-measurable random variable $S: \Omega \to \mathbb{R}$.
\item Probabilistic classifier \citep{hernandez2011brier}: 
$\mathcal{H}$-measurable random variable $Z: \Omega \to [0,1]$,
interpreted as estimator of posterior class probability $\Psi = P[A\,|\,\mathcal{H}]$.
\item Calibrated probabilistic classifier: $\mathcal{H}$-measurable random variable $Z: \Omega \to [0,1]$
with the property $Z = P[A\,|\,\sigma(Z)]$. \cite{hernandez2012unified}
called in Definition~38 such probabilistic classifiers `perfectly calibrated'.
\item True posterior class probability $\Psi = P[A\,|\,\mathcal{H}]$: In the terms
introduced before in this paper, it can also be described as a sufficient, calibrated 
and probabilistic classifier.
\end{itemize}
Typically, a scoring classifier $S$ will be the outcome of some kind of optimisation procedure
in the course of which one tries to link $S$ as best as possible to the positive class $A$, on
the basis of a pre-defined set of features. This procedure commonly is described as `learning 
a classifier' and can result in a scoring classifier with an arbitrary range (like in the case of
SVMs) or in a probabilistic classifier (like in the case of binary logistic regression).

A number of efficient methods were proposed in the literature to deal with the problem of
transforming a scoring or probabilistic classifier into a calibrated classifier (see 
\citealp{kull2017betacalibration}, and the references therein). However, focussing on this
step only would fail to control the grouping loss.

\subsection{Recommendation} 
\label{se:recom}
Theorem~\ref{th:devroye} shows that the optimisation criterion for
learning a scoring classifier $S$ must be carefully chosen in order to make sure that $S$ becomes sufficient for 
the full information content of $\mathcal{H}$ with respect to $A$. Only then the grouping 
loss for the estimation of the true posterior class probability will vanish. Proposition~\ref{pr:increasing}
suggests a way to achieve this: Show that $S$ satisfies \eqref{eq:exact} if not for all $t \in (0,1)$
then at least for a large number of $t$ that exhausts the whole interval $(0,1)$. This exercise 
can be supported by inspection of the Brier curve for $S$ constructed by plotting the 
right-hand side of \eqref{eq:exact} against $t$. This Brier curve should be concave and dominated
by the Brier curves of all other candidate scoring classifiers. 

By Remark~\ref{rm:ROC}, instead of striving to find the minimal Brier curve, equivalently we could try
to find a scoring classifier $S$ such that the associated ROC curve is maximal (see, e.g., 
\citealp{MarroccoROC2008}). This observation provides additional support for the
`ranking loss + isotonic regression' approach proposed by \cite{menon2012predicting}.

Recall also that 
Theorem~\ref{th:Langford} provides a promising alternative approach 
for combining optimal classifiers to obtain well-performing estimators of the posterior class probability.
See \cite{langford2005estimating} for a practical implementation of this approach. What makes
this approach particularly attractive is the fact that the classifiers being combined need not belong
to the same type (e.g.\ SVMs) but can be chosen at discretion.


\section{Conclusions}
\label{se:conclusions}

We have pointed out `grouping loss' as a potentially contributing factor to the 
miscalibration of probabilistic classifiers. Grouping loss is caused by a gap
between the information available for the calibration and the information actually taken into account
for the calibration exercise. The absence of grouping loss is equivalent to the property
of sufficiency as defined in Section~32.3 of \cite{devroye1996probabilistic} and known from statistics and 
the literature on dimension reduction techniques.
Sufficiency in turn can be characterised in terms of the information needed for optimally
solving a range of cost-weighted mean loss problems. We have presented further criteria 
for sufficiency like
comonotonicity which might be easier to establish. In addition, we have presented an example
that shows that `nesting' as part of the definition of sufficiency is necessary for fully
benefiting from the concept.

Nonetheless, in practice it could be difficult or even impossible to identify 
a sufficient scoring classifier. In this case, the so-called probing reduction 
\citep{langford2005estimating} might be useful. We have restated the theorem that
justifies this approach in order to better describe its connection to sufficiency
and grouping loss. 

Finally we have revisited the so-called Brier curves \citep{reid2011information, 
hernandez2011brier} and provided an extensive list
of their properties. Thanks to the fact that the area under a Brier curve for a probabilistic
classifier is just half of the Brier score of that classifier, Brier curves are a useful
tool for analysing calibration questions and can complement or even replace ROC curves
in this respect as well as a criterion for classifier development.

\section*{Acknowledgements}

The author thanks Tilmann Gneiting and an anonymous reviewer for suggestions that
helped to improve earlier versions of this paper. 


\addcontentsline{toc}{section}{References}

\appendix

\section{Proofs}
\label{se:proofs}

\subsection{Proof of Theorem~\ref{th:Langford}} 
\label{se:Langford}

For the readers' convenience, we present here an expanded version 
of the proof by \cite{langford2005estimating}. This version, in particular,
makes the informal part of the original proof between Eq.~(2) and the following inequality
more precise.

The first equation in \eqref{eq:Langford} 
is obtained through a combination of Proposition~\ref{pr:brier} and 
Proposition~\ref{pr:area}.

For all $H \in \mathcal{H}$ and $t \in (0,1)$, it can easily be shown that
$$L(H,t) \ = \ (1-t)\,P[A] + \int_H (t - \Psi)\,dP.$$
Define the symmetric difference $M \triangle N$ of two sets $M$ and $N$ as
$$M \triangle N = M\setminus N \cup N \setminus M.$$
Then, since $L_{\mathcal{H}}^\ast(t) = L(\{\Psi > t\}, t)$ by 
Theorem~32.4 of \cite{devroye1996probabilistic}, it follows for all $t \in (0,1)$ that
\begin{equation*}
L(H(t), t) - L_{\mathcal{H}}^\ast(t) = \ \int_{H(t) \triangle \{\Psi>t\}} |\Psi - t|\,dP.
\end{equation*}
Hence, by Fubini's theorem it follows that
\begin{align*}
\int_0^1 L(H(t), t) - L_{\mathcal{H}}^\ast(t)\, dt & = 
\int_0^1 \int \mathbf{1}_{\{(t,\omega):h(t,\omega)=1,\, \Psi(\omega)\le t\}}\,(t-\Psi(\omega))\,
 P(d\omega)\,dt\\
& \  +  \int_0^1 \int\mathbf{1}_{\{(t,\omega):h(t,\omega)=0,\, \Psi(\omega)> t\}}\,(\Psi(\omega) - t)\,
 P(d\omega)\,dt\\
& =  \int \left(\int_0^1 \mathbf{1}_{\{t:h(t,\omega)=1,\, \Psi(\omega)\le t\}}\,(t-\Psi(\omega))\,
 dt\right) P(d\omega)\\
& \ + \int \left(\int_0^1 \mathbf{1}_{\{t:h(t,\omega)=0,\, \Psi(\omega)> t\}}\,(\Psi(\omega)-t)\,
 dt\right) P(d\omega).
\end{align*}
Define $$g(t,\omega)\ =\ \begin{cases}
1, & \text{if}\ Z(\omega) > t,\\
0, & \text{if}\ Z(\omega) \le t.
\end{cases}$$
With this notation, it follows that $\{Z > t\} = \{\omega:g(t,\omega)=1\}$ and therefore
\begin{align*}
\int_0^1 L(\{Z > t\}, t) - L_{\mathcal{H}}^\ast(t)\, dt & = \int \left(\int_0^1 
\mathbf{1}_{\{t:g(t,\omega)=1,\, \Psi(\omega)\le t\}}\,(t-\Psi(\omega))\,
 dt\right) P(d\omega)\\
& \ + \int \left(\int_0^1 \mathbf{1}_{\{t:g(t,\omega)=0,\, \Psi(\omega)> t\}}\,(\Psi(\omega)-t)\,
 dt\right) P(d\omega),
\end{align*}
Hence, \eqref{eq:Langford} is implied if we can show that for all $\omega \in \Omega$
\begin{multline}\label{eq:toshow}
\int_0^1 \mathbf{1}_{\{t:h(t,\omega)=1,\, \Psi(\omega)\le t\}}\,(t-\Psi(\omega))\,
 dt + \int_0^1 \mathbf{1}_{\{t:h(t,\omega)=0,\, \Psi(\omega)> t\}}\,(\Psi(\omega)-t)\,
 dt \\
\ \ge \ \int_0^1 
\mathbf{1}_{\{t:g(t,\omega)=1,\, \Psi(\omega)\le t\}}\,(t-\Psi(\omega))\,
 dt +  \int_0^1 \mathbf{1}_{\{t:g(t,\omega)=0,\, \Psi(\omega)> t\}}\,(\Psi(\omega)-t)\,
 dt.
\end{multline}
Let $I = \{t:g(t,\omega)=0,\,h(t,\omega)=1\}$ and $J = \{t:g(t,\omega)=1,\,h(t,\omega)=0\}$.
The definition of $g$ implies
\begin{equation}\label{eq:gh}
\int_0^1 g(t, \omega)\,dt \ = \ Z(\omega) \ = \ \int_0^1 h(t, \omega)\,dt.
\end{equation}
Denoting by $\ell(I)$ and $\ell(J)$ the Lebesgue measures of $I$ and $J$ respectively,
from \eqref{eq:gh} follows
$$\ell(I) \ = \ \ell(J).$$
The definition of $I$ and $J$ moreover implies that \eqref{eq:toshow} is equivalent to
\begin{multline}\label{eq:toshow2}
\int_{\Psi(\omega)}^1 \mathbf{1}_I(t)\,(t-\Psi(\omega))\,
 dt + \int_0^{\Psi(\omega)} \mathbf{1}_J(t)\,(\Psi(\omega)-t)\,
 dt \\
\ \ge \ \int_{\Psi(\omega)}^1 \mathbf{1}_J(t)\,(t-\Psi(\omega))\,
 dt + \int_0^{\Psi(\omega)} \mathbf{1}_I(t)\,(\Psi(\omega)-t)\,
 dt.
\end{multline}
Consider now the case $\Psi(\omega) < Z(\omega)$. Then 
\begin{gather*}
[Z(\omega), 1] \ = \ [Z(\omega), 1]\cap [\Psi(\omega), 1] \ = \ \{t:g(t,\omega)=0\} \cap [\Psi(\omega), 1],\\
[\Psi(\omega), Z(\omega)) \ = \ [0, Z(\omega)) \cap [\Psi(\omega), 1] \ = \ 
\{t:g(t,\omega)=1\} \cap [\Psi(\omega), 1]
\\ \text{and}\quad 
\emptyset \ = \ [Z(\omega), 1] \cap [0, \Psi(\omega)) \ = \
\{t:g(t,\omega)=0\} \cap [0, \Psi(\omega)).
\end{gather*}
Making use of these observations and of $I \subset \{t:g(t,\omega)=0\}$ and
$J \subset \{t:g(t,\omega)=1\}$, we obtain for the terms in \eqref{eq:toshow2}
\begin{align*}
\int_{\Psi(\omega)}^1 \mathbf{1}_I(t)\,(t-\Psi(\omega))\,
 dt & \ \ge \ \bigl(Z(\omega) - \Psi(\omega)\bigr)\,\ell(I),\\
\int_0^{\Psi(\omega)} \mathbf{1}_J(t)\,(\Psi(\omega)-t)\,
 dt & \ \ge\ 0\\
\int_{\Psi(\omega)}^1 \mathbf{1}_J(t)\,(t-\Psi(\omega))\,
 dt  & \ \le \  \bigl(Z(\omega) - \Psi(\omega)\bigr)\,\ell(J),\\
\int_0^{\Psi(\omega)} \mathbf{1}_I(t)\,(\Psi(\omega)-t)\,
 dt & \ = \ 0. 
\end{align*}
Since $\ell(I) = \ell(J)$ it follows that \eqref{eq:toshow2} and therefore also \eqref{eq:toshow}
are true in the case $\Psi(\omega) < Z(\omega)$.

Assume now $\Psi(\omega) \ge Z(\omega)$. Then
\begin{gather*}
[0, Z(\omega)) \ = \ [0, Z(\omega)) \cap [0, \Psi(\omega)) \ = \ \{t:g(t,\omega)=1\} \cap [0, \Psi(\omega)),\\
\emptyset \ = \ [0, Z(\omega)) \cap [\Psi(\omega), 1] \ = \
\{t:g(t,\omega)=1\} \cap [\Psi(\omega), 1]\\
\text{and} \quad [Z(\omega), \Psi(\omega)) \ = \ [Z(\omega), 1] \cap [0, \Psi(\omega)) \ = \ 
\{t:g(t,\omega)=0\} \cap [0, \Psi(\omega)).
\end{gather*}
Making use of these further observations and of $I \subset \{t:g(t,\omega)=0\}$ and
$J \subset \{t:g(t,\omega)=1\}$, we obtain in this case for the terms in \eqref{eq:toshow2}
\begin{align*}
\int_{\Psi(\omega)}^1 \mathbf{1}_I(t)\,(t-\Psi(\omega))\,
 dt & \ \ge \ 0,\\
\int_0^{\Psi(\omega)} \mathbf{1}_J(t)\,(\Psi(\omega)-t)\,
 dt & \ \ge\ \bigl(\Psi(\omega) - Z(\omega)\bigr)\,\ell(J)\\
\int_{\Psi(\omega)}^1 \mathbf{1}_J(t)\,(t-\Psi(\omega))\,
 dt  & \ = \  0,\\
\int_0^{\Psi(\omega)} \mathbf{1}_I(t)\,(\Psi(\omega)-t)\,
 dt & \ \le \ \bigl(\Psi(\omega) - Z(\omega)\bigr)\,\ell(I). 
\end{align*}
Since $\ell(I) = \ell(J)$ it again follows that \eqref{eq:toshow2} and therefore also \eqref{eq:toshow}
are true in the case $\Psi(\omega) \ge Z(\omega)$. This completes the proof. \hfill $\Box$

\subsection{Proof of Proposition~\ref{pr:properties}}
\label{se:properties}

\begin{subequations}
Observe that for fixed $t \in [0,1]$ and all $x \in[0,1]$, it holds that
\begin{equation}\label{eq:x}
\min(t, 1-t)\,x\,(1-x) \ \le \ \min\bigl((1-t)\,x,\, t\,(1-x)\bigr)\ \le \ x\,(1-x).
\end{equation}
From the properties of conditional probabilities, it follows that
\begin{equation}\label{eq:written}
B(t) \ = \ E\bigl[\min\bigl((1-t)\,\Psi,\,t\,(1-\Psi)\bigr)\bigr].
\end{equation}
\eqref{eq:x} and \eqref{eq:written} together imply items~\ref{it:one}, \ref{it:two} and \ref{it:three}.
\end{subequations}

\textbf{Item~\ref{it:derivative}.} For $0 \le t \le 1$, observe that
\begin{align*}
B(t) &\ = \ (1-t)\,E[\Psi\,\mathbf{1}_{\{\Psi\le t\}}] + 
        t\,E[(1-\Psi)\,\mathbf{1}_{\{\Psi> t\}}]\\
 & \ = \ E\bigl[\min(t, \Psi)\bigr] - t\, P[A].       
\end{align*}
As a consequence of concavity, for each $t \in (0,1]$ the left derivative $\frac{d^-}{d t}B(t)$  
and for each $t \in [0,1)$ the right derivative $\frac{d^+}{d t}B(t)$
of $B$ in $t$ exist and are finite. For fixed $\omega \in \Omega$, it holds that
\begin{align*}
\frac{\partial^+}{\partial t} \min(t, \Psi(\omega)) & \ = \ \mathbf{1}_{\{t<\Psi\}}(\omega),\\
\frac{\partial^-}{\partial t} \min(t, \Psi(\omega)) & \ = \ \mathbf{1}_{\{t\le \Psi\}}(\omega).
\end{align*}
By the dominated convergence theorem, this implies item~\ref{it:derivative}.

\textbf{Item \ref{it:max}.} Because $B$ is concave, continuous and non-negative on $[0,1]$ we know 
that it has a single maximum
that is assumed by at least one $t^\ast \in (0,1)$.
All such $t^\ast$ must satisfy
\begin{gather*}
\frac{d^-}{d t}B(t^\ast)\ \ge\ 0\ \ge\ \frac{d^+}{d t}B(t^\ast)\\
\iff 1  - P[A] - P[\Psi < t^\ast] \ \ge\ 0\ \ge\ 1  - P[A] - P[\Psi \le t^\ast]\\
\iff P[\Psi < t^\ast] \ \le \ 1 -P[A] \ \le \ P[\Psi \le t^\ast].
\end{gather*}
Hence the $(1-P[A])$-quantiles of $\Psi$ are the maximisers of $B$. This proves item~\ref{it:max}.

\textbf{Item~\ref{it:transform}.} By Eq.~(2.4) of \cite{saerens2002adjusting}, $\Psi_Q$ and
$\Psi_P$ are related by
\begin{equation*}
\Psi_Q \ = \ \frac{\frac{q}{p}\,\Psi_P}{\frac{q}{p}\,\Psi_P + \frac{1-q}{1-p}\,(1-\Psi_P)}.
\end{equation*}
This implies 
$$\Psi_P \ \le s \quad \iff \quad \Psi_Q \ \le \ t,$$
and hence 
$$P[\Psi_P \le s\,|\,A] \ = \ Q[\Psi_Q \le t\,|\,A] \qquad \text{and} \qquad
P[\Psi_P > s\,|\,A^c] \ = \ Q[\Psi_Q > t\,|\,A^c].$$ 
Furthermore, by definition of $s$, it holds that
$$\frac{s\,(1-p)}{(1-s)\,p} \ = \ \frac{t\,(1-q)}{(1-t)\,q}.$$
This completes the proof of \eqref{eq:transform}. \hfill \qed

\subsection{Proof of Proposition~\ref{pr:cov}} 
\label{se:cov}

From Proposition~\ref{pr:properties} it follows 
\begin{equation}\label{eq:first}
(1-q)\,P[A\cap \{\Psi \le q\}] + q\,P[A^c \cap \{\Psi > q\}] \ = \ L(\{\Psi > q\}, q) 
\ \le \ E[\Psi\,(1-\Psi)].
\end{equation}
By assumption, we have $P[\Psi > q] = P[A]$ which implies that
\begin{eqnarray*}
\lefteqn{(1-q)\,P[A\cap \{\Psi \le q\}] + q\,P[A^c \cap \{\Psi > q\}]}\hspace{3cm}\\ 
&\ = \ &
(1-q)\,E[\Psi\,\mathbf{1}_{\{\Psi \le q\}}] + q\,E[(1-\Psi)\,\mathbf{1}_{\{\Psi > q\}}]\\
& \ = \ & E[\Psi\,\mathbf{1}_{\{\Psi \le q\}}] - q\,E[\Psi\,\mathbf{1}_{\{\Psi \le q\}}] +
q\,P[A] - q\,E[\Psi\,\mathbf{1}_{\{\Psi > q\}}] \\
& \ = \ & E[\Psi\,\mathbf{1}_{\{\Psi \le q\}}] + q\,(P[A] - E[\Psi])\\
& \ =\ & E[\Psi\,\mathbf{1}_{\{\Psi \le q\}}]\\
& \ = \ & P[A] - P[A\cap\{\Psi > q\}].
\end{eqnarray*}
Combining this with \eqref{eq:first}, we obtain
\begin{gather*}
P[A] - P[A\cap\{\Psi > q\}] \ \le \ E[\Psi\,(1-\Psi)]\\
\iff\quad  P[A](1-P[A]) - \mathrm{cov}[\mathbf{1}_A,\,\mathbf{1}_{\{\Psi > q\}}] 
    \ \le \ E[\Psi\,(1-\Psi)].
\end{gather*}
Dividing both sides of this last inequality by $P[A](1-P[A])$ gives the left-hand side of \eqref{eq:corr}.

Regarding the right-hand side of \eqref{eq:corr}, observe that because of the
$\mathcal{H}$-measurability of $\Psi$ it holds that
\begin{equation}\label{eq:linear}
\begin{split}
E[(\mathbf{1}_A - \Psi)^2] &\ =\ \min\limits_{Z\ \mathcal{H}\text{-measurable}} 
    E[(\mathbf{1}_A - Z)^2] \\
    & \ \le\ \min\limits_{a, b \in \mathbb{R}} 
E[(\mathbf{1}_A - (a\,\mathbf{1}_{\{\Psi > q\}}+b))^2].
\end{split}
\end{equation}
In plain language, the minimum squares error for approximating $\mathbf{1}_A$ by
general regression is lower than minimum squares error for approximating $\mathbf{1}_A$ by
linear regression on $\mathbf{1}_{\{\Psi > q\}}$. It is well known (or follows from a short
calculation) that the minimising linear regression coefficient $a$ is given by
$$a\ =\ \frac{\mathrm{cov}[\mathbf{1}_A,\,\mathbf{1}_{\{\Psi > q\}}]}
    {\mathrm{var}[\mathbf{1}_{\{\Psi > q\}}]} 
\ =\ \mathrm{corr}[\mathbf{1}_A,\,\mathbf{1}_{\{\Psi > q\}}].$$
The optimising $b$ is obtained as
$$b \ = \  P[A]\,(1 - \mathrm{corr}[\mathbf{1}_A,\,\mathbf{1}_{\{\Psi > q\}}]).$$
In addition, it holds for these numbers $a$ and $b$ that
\begin{align*}
E[(\mathbf{1}_A - (a\,\mathbf{1}_{\{\Psi > q\}}+b))^2]&  \ = \
    \mathrm{var}[\mathbf{1}_A] - \mathrm{var}[a\,\mathbf{1}_{\{\Psi > q\}}+b] \\
    & \ = \ P[A](1-P[A])  - \mathrm{corr}[\mathbf{1}_A,\,\mathbf{1}_{\{\Psi > q\}}]^2 \,
    \mathrm{var}[\mathbf{1}_{\{\Psi > q\}}].
\end{align*}
Since $E[(\mathbf{1}_A - \Psi)^2] = E[\Psi\,(1-\Psi)]$, the right-hand side 
inequality of \eqref{eq:corr} now follows from \eqref{eq:linear}.
 \hfill $\Box$

\end{document}